\documentclass[lettersize,journal]{IEEEtran}
\usepackage{amsmath,amsfonts}
\usepackage{array}
\usepackage[caption=false,font=footnotesize,labelfont=rm,textfont=rm]{subfig}
\usepackage{textcomp}
\usepackage{stfloats}
\usepackage{url}
\usepackage{verbatim}
\usepackage{graphicx}
\usepackage{cite}
\hyphenation{op-tical net-works semi-conduc-tor IEEE-Xplore}
\setcounter{secnumdepth}{2}
\usepackage{color}
\usepackage{hyperref}
\usepackage{etoolbox}
\usepackage{pifont} 
\usepackage{bm}
\usepackage{cases}
\usepackage{indentfirst}
\usepackage{subeqnarray}
\usepackage{amsthm}

\usepackage{type1cm}

\newtheorem{lemma}{Lemma}
\graphicspath{ {./images/} }
\hyphenation{op-tical net-works semi-conduc-tor IEEE-Xplore}

\usepackage[linesnumbered,ruled,vlined]{algorithm2e}
\usepackage{algpseudocode}

\begin{document}

\title{Hierarchical Federated Learning for Social Network with Mobility}

\author{{Zeyu Chen, Wen Chen,~\IEEEmembership{Senior~Member,~IEEE}, Jun Li,~\IEEEmembership{Fellow,~IEEE},\\
Qingqing Wu,~\IEEEmembership{Senior~Member,~IEEE}, Ming Ding,~\IEEEmembership{Senior~Member,~IEEE}, \\Xuefeng Han, Xiumei Deng, Liwei Wang}
\thanks{Zeyu Chen, Wen Chen, Qingqing Wu, Xuefeng Han and Liwei Wang are with the Department of Electronic Engineering, Shanghai Jiao Tong University, Minhang 200240, China (e-mail: \{xiaoyuc2001; wenchen; qingqingwu; hansjell-watson; wanglw2000\}@sjtu.edu.cn). Jun Li is with the School of Information Science and Engineering, Southeast University, Nanjing, 210096, CHINA.(email: jleesr80@gmail.com). Ming Ding is with Data61, CSIRO, Sydney, NSW 2015, Australia (e-mail: ming.ding@data61.csiro.au). Xiumei Deng is with the Pillar of Information Systems Technology and Design, Singapore University of Technology and Design, Singapore 487372 (e-mail: xiumei deng@sutd.edu.sg). Correspondence author: Wen Chen
}
}


\maketitle

\begin{abstract} 
Federated Learning (FL) offers a decentralized solution that allows collaborative local model training and global aggregation, thereby protecting data privacy. In conventional FL frameworks, data privacy is typically preserved under the assumption that local data remains absolutely private, whereas the mobility of clients is frequently neglected in explicit modeling. In this paper, we propose a \textit{hierarchical federated learning} framework based on the \textit{social network with mobility} namely HFL-SNM that considers both data sharing among clients and their mobility patterns. Under the constraints of limited resources, we formulate a joint optimization problem of resource allocation and client scheduling, which objective is to minimize the energy consumption of clients during the FL process. In social network, we introduce the concepts of Effective Data Coverage Rate and Redundant Data Coverage Rate. We analyze the impact of effective data and redundant data on the model performance through preliminary experiments. We decouple the optimization problem into multiple sub-problems, analyze them based on preliminary experimental results, and propose Dynamic Optimization in Social Network with Mobility (DO-SNM) algorithm. Experimental results demonstrate that our algorithm achieves superior model performance while significantly reducing energy consumption, compared to traditional baseline algorithms.  
\end{abstract} 

\begin{IEEEkeywords}
Hierarchical Federated Learning, User selection, Resource Allocation, Social Network, Mobility.
\end{IEEEkeywords}

\section{Introduction}
With the breakthroughs in artificial intelligence, particularly in artificial neural networks, Machine Learning (ML) has rapidly evolved into a transformative technology. John J. Hopfield \cite{hopfield1982neural} and Geoffrey E. Hinton \cite{rumelhart1986learning} were awarded the 2024 Nobel Prize in Physics for their innovative works on physics by ML, which shows the critical role of  ML and artificial intelligence in contemporary science and technology. With the deployment of edge computing\cite{7488250} and the rapid growth of social media, massive data streams are generated, heightening public concern over data privacy and security. However, traditional ML models typically require data to be aggregated into central server, which poses privacy risks and is inefficient, especially for security-critical and privacy-sensitive data \cite{7342881,8110700,7883893}. This trend has led to the emergence of Federated Learning (FL) \cite{pmlr-v54-mcmahan17a} as an alternative model. FL facilitates decentralized model training on edge devices without sharing raw data \cite{LI201776}, which effectively mitigates the risk of privacy data leakage. As the advantages of FL in privacy protection and distributive training become increasingly apparent, many researchers have conducted in-depth research on FL about privacy protection\cite{9325934,9408373,10058838,9069945,shi2021hfl}. Wei \textit{et al} \cite{9069945} proposed the noising before model aggregation FL algorithm by integrating differential privacy(DP), which effectively balances the relationship between privacy protection and model convergence. However, although FL can effectively address privacy issues, it still encounters many critical challenges, such as excessive energy consumption by clients, high communication latency, and poor model performance, which have become critical areas of investigation. Chen \textit{et al}. \cite{9330566} proposed a multi-task FL framework in the context of Multi-Access Edge Computing networks  to minimize communication delays, addressing the challenges posed by the high mobility of terminal devices. Yang \textit{et al}. and Mo \textit{et al}. \cite{9264742,9475121} designed efficient algorithms to address the energy minimization problem under communication delay constraints in FL systems. \cite{9207871, 9210812, 9676703, 9772007, 10666737, 10375295} focus primarily on improving model performance with maximum accuracy under conditions of limited communication delay or energy constraints. These studies provide effective and feasible solutions for traditional FL in reducing overhead and improving model performance. 

Traditional FL relies on direct communication between devices and a central server. When a large number of devices participate, communication costs will increase dramatically. This issue becomes particularly pronounced in user-dense areas, where large-scale user access and model uploads strain network bandwidth. Furthermore, the centralized communication model in FL places a heavy burden on the server, which may struggle to handle model updates from all devices simultaneously. This leads to increased training latency and resource consumption, further slowing down the convergence of the model. In order to break through the bottlenecks of the traditional FL, \cite{liu2019clientedgecloudhierarchicalfederatedlearning,wang2020local,9054634} proposed a Hierarchical FL (HFL) architecture, which comprises three key components: Clients, Edges and Clouds. The introduction of edge server effectively alleviates the communication burden on the central server. \cite{liu2019clientedgecloudhierarchicalfederatedlearning} and \cite {wang2020local} demonstrated that HFL achieves
faster convergence than traditional FL architectures. However, these initial research only rely on a pre-defined system architecture and overlook the key challenge of heterogeneous communication costs. To address this limitation, Chai \textit{et al}. \cite{chai2020tifl} introduced a tier-based FL system (TiFL) that categorizes nodes into tiers based on their training speed and selects nodes with similar speeds for each training round to alleviate the straggler issue. Nevertheless, TiFL primarily focuses on optimizing training time rather than addressing the communication cost. Furthermore, Lim \textit{et al}. \cite{lim2021dynamic} tackled the dynamic edge association and resource allocation issues in HFL by proposing a hierarchical game framework. Similarly, Luo \textit{et al}. \cite{luo2020hfel} proposed Hierarchical Federated Edge Learning to optimize edge association and resource allocation, aiming to minimize the learning cost in HFL systems. However, the above approaches primarily focus on network bandwidth while neglecting the heterogeneous data distribution such as non-IID among the distributed nodes. Deng \textit{et al}. \cite{9999679}, and Wei \textit{et al}. \cite{9609994} proposed the  dynamic device scheduling and resource allocation and Greedy Matching with a Better Alternative optimization algorithms based on participation rate \cite{lyu2019optimal,xia2020multi,huang2020efficiency}, effectively achieving a balance between minimizing training delay and optimizing model performance in non-IID scenarios.

Nevertheless, the above studies are based on a static FL framework, where the locations of clients remain fixed throughout. However, in practical scenarios, clients exhibit mobility, which results in variations in channel conditions. Shi \textit{et al}. \cite{9207871} simulated client mobility by randomly initializing client positions at the beginning of each global round. In contrast, Chen \textit{et al}. \cite{chen2020performance} investigated a more realistic FL system, highlighting that client mobility during training can lead to clients moving out of the base station coverage area. Fan \textit{et al}.\cite{fan2023mobility} proposed a practical client mobility model in FL across multiple base stations, which takes into account the speed and direction of clients' movements, rather than relying on random initialization. 

On the other hand, existing research on FL all assumes that each data of client is entirely privately owned, without intersection with the data of other clients. In reality, some data may be shared among different clients. For instance, in daily communication, messages sent via phone calls, text messages, and emails are stored on both sender and receiver. Similarly, conversations on platforms like Twitter and WeChat are visible to participants involved. These data may contain text, images, and videos, which are closely tied to personal privacy and often rely on FL for model training, such as for detecting emotions or facilitating rapid responses. Given that each data sample is stored on both devices of sender and receiver, if both devices participate in FL, the shared portion of their data is retrained one time during a single global round. As the number of clients expands, this leads to a complex structure of a social network graph in which each client is represented as a node, and the data shared between clients serves as undirected edges connecting these nodes.\footnote{As a preliminary exploration of this scenario, this paper only considers data sharing between at most two clients. For interaction scenarios involving multiple participants such as group chats, which would require the construction of hypergraphs where identical hyperedges exist among multiple nodes, these cases fall outside the scope of this study.} However, allowing all clients to participate in FL results in a significant amount of data being retrained in a single global round, which inevitably increases the latency and energy consumption of clients. We are particularly interested in whether such repeated training affects the model performance of FL. Thus, we conducted preliminary experiments, which are shown in Appendix A. Previous research on FL typically does not consider the aspect of data sharing. We are the first to integrate this characteristic by constructing a social network graph model, designed to explore optimization problems within the context of FL. Moreover, the hierarchical FL structure and client mobility are considered in this paper. The main contributions of this paper are summerized as follows:
\begin{itemize}
    \item [\textbullet] We introduce the concepts of Effective Data Coverage Rate (EDCR) and Redundant Data Coverage Rate (RDCR) in the social network graph. According to the results of preliminary experiments, we validated EDCR is positively correlated with model performance, whereas RDCR shows the opposite trend.
    \item [\textbullet] Considering the communication between clients and servers, we propose a HFL framework based on the social network with mobility (HFL-SNM). Constrained by communication latency, bandwidth and EDCR, we formulate a joint optimization problem on client selection, edge association, CPU frequency controlling and bandwidth allocation to minimize the energy consumption of clients. We incorporate differential privacy to safeguard client privacy by injecting Gaussian noise during both model aggregation and broadcasting phases.
    \item [\textbullet]We decouple the problem into three sub-problems: resource allocation, edge association, and client selection. We prove the resource allocation problem is convex and design an Alternating Optimization (AO) algorithm to obtain the optimal bandwidth and CPU frequency of each client. Edge association is solved via Fast Greedy (FG) algorithm based on AO. As for client selection, combining the derived upper bound of objective function with the result of preliminary experiments, we design an Performance-Energy Metric Optimization (PEMO) algorithm. We develop the DO-SNM algorithm combining these methods.
    \item [\textbullet]Through experiments, we investigate the performance of DO-SNM under different EDCR constraints and identify the optimal EDCR constraint value. Compared with four baseline algorithms, DO-SNM reduces energy consumption by at least $60\%$, achieving at least $1\%$ higher test accuracy.
\end{itemize}

The rest of this paper is organized as follows. In Section II, we introduce the the system model and formulate the optimization problem. The Differential Privacy mechanism for HFL-SNM is proposed in Section III. The problem is decoupled and analyzed in Section IV, where the DO-SNM Algorithm is proposed. Then, we present the evaluation results in Section V and conclude this paper in Section VI. The preliminary experiments are exhibited in Appendix A. The prove of lemma 1-7 in Section III is shown in Appendix B-H.
 
\section{System Model And Problem Formulation }
We consider the framework in Fig. 2 consists of $M$ clients, $K$ edge servers (ESs) and a cloud server (CS). In this framework, we assume that a social network consists of clients $\mathcal{U}=\{u_{1},u_{2},...,u_{N}\}$, which is substitute by a community in Fig.2. Each client owns a local data set $\mathcal D_n = \{ (\bm x_i, y_i) | i=1, 2, \cdots, D_n \}$, where $\boldsymbol{x}_i$ denotes the data feature of the $i$-th sample, $y_i$ denotes the corresponding labeled output of $\boldsymbol{x}_i$, and ${D}_n$ denotes the data size of $\mathcal{D}_n$. Let ${
D}=\sum_{n \in \{i| u_i \in \mathcal{U}\}}{D}_n$ denote the volume of total data in $\mathcal{U}$. Let $\mathcal{S}=\{u_{1},u_{2},...,u_{M}\}$ denote the clients selected from the community to participate in FL, which is a subset of $\mathcal{U}$. Let $\mathcal{K=}\{e_{1},e_{2},...,e_{K}\}$ denote the set of ESs. $\mathcal{S}^{\mathrm{k}}$ represents the subset of $\mathcal{S}$, in which clients are associated with  $\mathrm{e}_k$. 

\subsection{Social Network Graph}
According to the data statistics from Facebook and Twitter, the social network graph of a social media is sparse \cite{ravazzi2021learning}. Thus we create a sparse graph to describe the relations among the clients. The graph is shown in Fig. 2, where different nodes represent different clients, and the weight of each edge indicates the size of the shared data between the connected clients. We define effective data held in the selected clients $\mathcal{S}$ as $\mathcal{D}_\mathrm{ef}^\mathcal{S}=\bigcup_{n \in \{i| u_i \in \mathcal{S}^k\}}\mathcal{D}_n$. Let $\mathcal{D}_\mathrm{re}^\mathcal{S} = \bigcup_{i=1}^{M-1} \bigcup_{j=i+1}^{M} \left( \mathcal{D}_i \cap \mathcal{D}_j \right)$ denote the redundant data held in $\mathcal{S}$. The EDCR and RDCR is defined as
\begin{equation}\label{eq:(1)}
 r_\mathrm{ef}=\frac{{D}_{\mathrm{ef}}^{\mathcal{S}}}{{D}_{\mathrm{ef}}^{\mathcal{U}}},
\end{equation}
\begin{equation}\label{eq:(2)}
r_\mathrm{re}=\frac{{D}_{\mathrm{re}}^{\mathcal{S}}}{{D}_{\mathrm{re}}^{\mathcal{U}}},
\end{equation}
where $D_{\mathrm{ef}}^{\mathcal{S}}$ and $D_{\mathrm{re}}^{\mathcal{S}}$ denote the data size of $\mathcal{D}_{\mathrm{ef}}^{\mathcal{S}}$ and $\mathcal{D}_{\mathrm{re}}^{\mathcal{S}}$ respectively.
\begin{figure}
    \centering 
    \includegraphics[height=6cm,width=8.4cm]{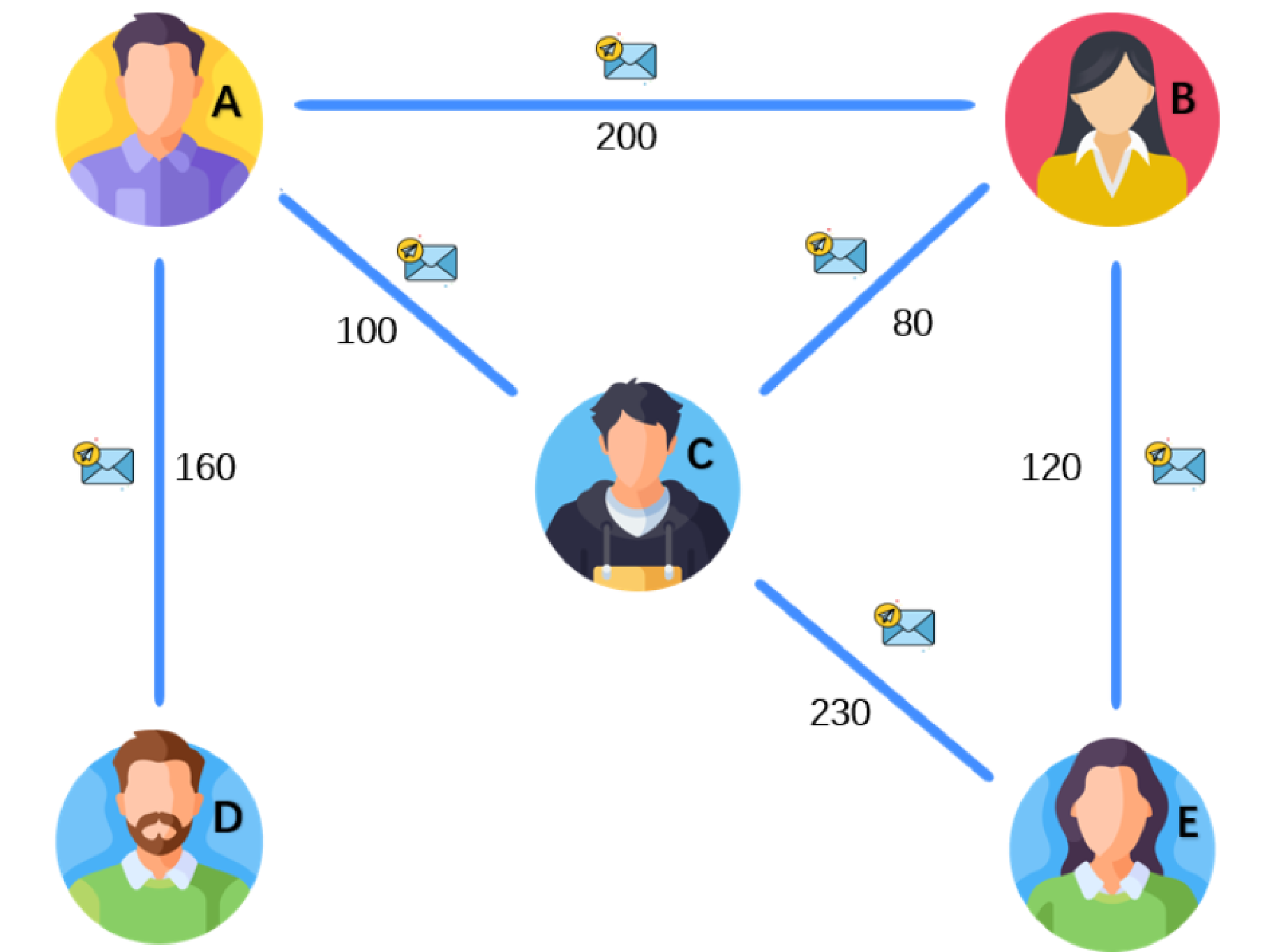}
    \caption{An example of social network. Clients A, B, C, D, E respectively possess 460, 400, 410, 160, 350 samples.  If A, B, C are selected, all 890 (=160+100+200+80+120+230) unique samples of the five clients are covered, which are effective data. While the amount of data to be trained reaches 1270 (=460+400+410), producing additional 380 (=1270-890) samples, which are redundant data. The redundant data consists of 200 samples from A and B, 100 samples from B and C , 80 samples from A and C. If five clients simultaneously selected, the amount of effective data is still 890 while the amount of redundant data reaches 890. If client A individually selected, the amount of effective data is 460 and the redundant data is 0. } \label{Social-Network}
\end{figure}
\subsection{Mobility}
In this paper, we choose Urban Macro (UMa) as the communication scenario, which is suitable for user-dense environments, such as commercial and residential areas in cities. The intersite distance in the UMa scenario is usually around $500$m \cite{3gpp38901}. The communication range of each ES is $2000$m in this paper. ESs are uniformly distributed and their locations remain static.

To realistically simulate the mobility of clients, we adopt the mobility model based on \cite{fan2023mobility} and \cite{3gpp38901}. Let $(x_n^T, y_n^T)$ denote the position of $u_n$ in the T-th global round. At the beginning of the (T+1)-th global round, the position is updated according to the following rules:
\begin{align*}
&x_{n}^{T+1} =x_{n}^{T}+{v_{n}^{T}}\cos{d_n^{T}} t_n^{T},\\
&y_{n}^{T+1} =y_{n}^{T}+{v_{n}^{T}}\sin{d_n^{T}} t_n^{T},
\end{align*}
where ${v_n^{T}} \in [0,100]$ $\mathrm{km/h}$ and $d_n^{T} \in [0,2\pi)$ represent the the magnitude and the direction of the velocity of $u_n$ in the T-th global round. Before training, $(x_n^0, y_n^0)$ and $v_n^0$ are randomly initialized; at the beginning of each global round, $d_n^{T}$ is randomized.

\subsection{System Architecture}
The model framework is shown in Fig. 2. For the selected clients, each client performs a local training to update its local model, after which it uploads the updated local model to its association ES. Then each ES performs edge aggregation on the models uploaded by clients. After a certain number of edge aggregation, all ESs upload the edge models to the CS. The CS performs global aggregation to generate a new global model. The new global model is distributed to ESs to initiate a new round of training.
\begin{figure*}[htbp] 
    \centering
    \includegraphics[width=14cm,height=12cm]{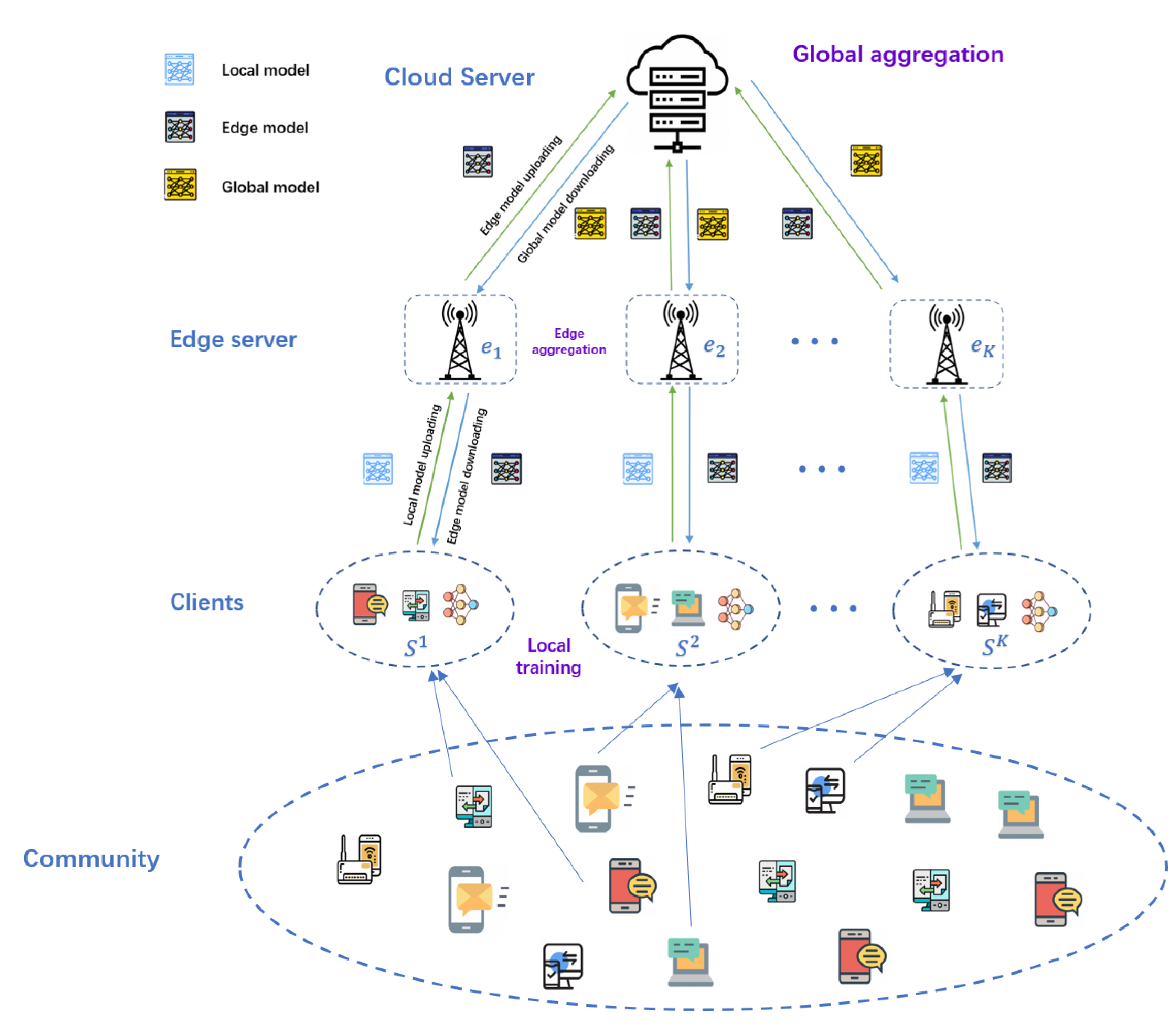} 
    \caption{Hierarchical Federal Learning Framework-Social Network with Mobility.}
    \label{System-Structure}
\end{figure*}

In HFL-SNM framework, both edge aggregation and global aggregation are performed with synchronous aggregation. In addition, the bandwidth and edge iteration of each ES is considered equal. Let $p_n$, $f_n$ and $\lambda_n$ represent the transmission power, transmission frequency, and local iteration number of $u_n$, respectively. Following this overview, the detailed processes of local training, edge aggregation and global aggregation are  described in the following subsections.

\subsubsection{Local training}
In this stage, all selected clients utilize their local data to train the local models. Let $\boldsymbol{\omega}_n^t$ represent the model of $u_n$  at the $t$-th local iteration. Each client utilizes Gradient Descent to perform the model updating. Let $f_n\left(\boldsymbol{\omega}_{n}^t,\boldsymbol{x}_i,y_i \right)$ denote the loss function of $\boldsymbol{x}_i,y_i$. The specific description are as follows. 
\begin{equation}\label{eq:(3)}
 F_n(\boldsymbol{\omega}_{n}^t)=\frac{1}{{D}_n} \sum_{i=1} ^{{D}_n}f_n\left(\boldsymbol{\omega}_{n}^t,\boldsymbol{x}_i,y_i \right),
\end{equation} 
\begin{equation}\label{eq:(4)} 
 \boldsymbol{\omega}_{n}^{t+1}=\boldsymbol{\omega}_{n}^{t}-\eta \nabla F_n\left(\boldsymbol{\omega}_{n}^{t}\right),
\end{equation}
where $\boldsymbol{\omega}_{n}^{t+1}$ denotes the model parameters at the ($t+1$)-th local iteration, and $\eta$ is the predetermined learning rate.

The overhead of local training is divided into two parts: local computation latency and energy consumption, which are firmly related to the data size and the CPU frequency. Limited to the hardware, the CPU frequency of $u_n$ is constrained by $\nu_{\min} \leq \nu_n \leq \nu_{\max}$. Let $C$ denote the number of CPU operations required for one gradient descent iteration on a sample, which is considered equal for all clients. The computation latency and energy consumption can be calculated by
\begin{equation}\label{eq:(5)}
 t_n^\mathrm{cmp}=\frac{\lambda_n C {D}_n}{\nu_n},
\end{equation}  
\begin{equation}\label{eq:(6)}
 E_n^\mathrm{cmp}=\frac{\alpha}{2}\lambda_n C\nu_{n}^2 {D}_n,
\end{equation}
where $\lambda_n$ denotes the number of local iterations bounded by $\mathcal{O}(\kappa \log(\frac{1}{\varepsilon}))$\cite{Konečný03092017}, with $\kappa$ being a coefficient proportional to $D_n$ and $\varepsilon$ representing the local training accuracy threshold. $\alpha$ denotes the chip capacitance coefficient.
\subsubsection{Edge aggregation}
After local updates, all clients send the local models to associated ESs for edge aggregation. 
Let $\boldsymbol{\omega}_{k}^{l+1}$ represents the edge model of $e_k$ in the ($l+1$)-th edge iteration. The edge model is aggregated with weights based on the data size from the clients. The specific update process is represented as
\begin{equation}\label{eq:(7)}
 \boldsymbol{\omega}_{k}^{l+1}=\frac{\Sigma_{n \in \{i| u_i \in \mathcal{S}^k\}} {D}_n \boldsymbol{\omega}_{n}^{\lambda_n}}{\sum_{n \in \{i| u_i \in \mathcal{S}^k\}}{D}_n},
\end{equation}
where $w_{n}^{\lambda_n}$ denotes the local model of $u_n$ in previous edge iteration. At the beginning of the $l$-th edge iteration, $\omega_n^0 = \omega_k^l$. 

We consider the uploading process is implemented via Frequency Division Multiple Access (FDMA). Let $\mathrm{B}_0$ represent the bandwidth of each ES. According to the Shannon formula, the information transmit rate $r_{n,k}$ between $u_n$ and $e_k$ is calculated by
\begin{equation}\label{eq:(8)}
 r_{n,k}=\delta_{n,k}B_{n,k}\log_{2} (1+\frac{h_{n,k}p_n}{B_{n,k}N_{0}}),
\end{equation}
where $B_{n,k}$ denotes the bandwidth of $u_n$ allocated by $e_k$. $\delta_{n,k}\in\{0,1\}$ denotes the association status between $u_n$ and $e_k$. When $u_n\in\mathcal{S}^{\mathrm{k}}$, $\delta_{n,k}=1$, otherwise $\delta_{n,k}=0$. $N_{0}$ is the background noise, and $p_n$ denotes the transmission power\cite{3gpp38101} of $u_n$. $h_{n,k}$ denotes the channel fading coefficient during transmission. Under UMa scenarios in \cite{3gpp38901}, the channel fading is calculated by
\begin{equation}\label{eq:(9)}
 -10\log_{10}(h_{n,k}) = 32.4+20\log_{10}(f_{n})+30\log_{10}(dist_{n,k}),
\end{equation}
where $f_{n}$ denotes the transmission frequency\cite{3gpp38101} of $u_n$ and $dist_{n,k}$ represents the physical distance between $u_n$ and $e_k$. Then the communication latency between $u_n$ and $e_k$ is expressed as 
\begin{equation}\label{eq:(10)}
 t_{n,k}^\mathrm{com}=\frac{z}{r_{n,k}}=\frac{z}{\delta_{n,k}B_{n,k}\log_{2}\left(1+\frac{h_{n,k}p_n}{B_{n,k}N_{0}}\right)},
\end{equation}
where ${z}$ indicates the bit number of the local models updated by clients. On account of the identical neural network structure for clients, all clients have the same ${z}$. Therefore,the energy consumption is calculated by
\begin{equation}\label{eq:(11)}
 E_{n,k}^\mathrm{com}=pt_{n,k}^\mathrm{com}=\frac{zp}{\delta_{n,k}B_{n,k}\log_{2}\left(1+\frac{h_{n,k}p_n}{B_{n,k}N_{0}}\right)}.
\end{equation}
\subsubsection{Global aggregation}
After finishing the edge aggregation, all ESs upload the edge models to CS for global aggregation, which is characterized as
\begin{equation}\label{eq:(12)}
 \boldsymbol{\omega}^{T+1}=\frac{\sum_{k \in \{i| e_i \in \mathcal{K}\}}\sum_{n \in \{i| u_i \in \mathcal{S}^k\}}{D}_n\boldsymbol{\omega}_{k}^{\tau}}{\sum_{n \in \{i| u_i \in \mathcal{S}\}}{D}_n},
\end{equation}
where $\omega_{k}^\tau$ represents the edge model of $e_k$ after $\tau$ edge aggregation. At the beginning of $T$th global round, $\omega_k^0 = \omega^T$.

Typically, the associated latency between CS and ES are negligible compared to local computation and uplink communication \cite{9475121,9207871,9676703}. Then the total latency and energy consumption in a global round is computed by
\begin{equation}\label{eq:(13)}
 t_\mathrm{total}=\max\left\{\tau\left(t_n^{cmp}+t_{n,k}^{com}\right)\right\},
\end{equation}
\begin{equation}\label{eq:(14)}
 E_\mathrm{total}=\sum_{k \in \{i| e_i \in \mathcal{K}\}}\tau\left(\sum_{n \in \{i| u_i \in \mathcal{S}^k\}}(E_n^{cmp}+E_{n,k}^{com})\right).
\end{equation}

\subsection{Problem Formulation}
This study aims to comprehensively consider the social network graph, client mobility, and model performance. With the objective of minimizing user energy consumption, the joint optimization problem is formulated as
\begin{subequations}
\begin{align}\label{eq:(15)}
\textbf{ P:} \phantom{E_{to}}& \min_{\{\mathcal S,\mathcal{S}^k,B_{n,k},\nu_n\}} E_\mathrm{total}, \notag\\
{\rm s.t.} \quad & r_\mathrm{ef}\geq r_\mathrm{ef}^0, \\
& \delta_{n,k} \in \{0, 1\}, \quad \forall u_n \in \mathcal{S}, \thinspace e_k \in \mathcal{K},\\
& \sum_{n \in \{i| u_i \in \mathcal{S}^k\}}\delta_{n,k} = 1, \quad \forall \mathcal{S}^k \subseteq \mathcal{S}, \thinspace e_k \in \mathcal{K},\\
& t_n^\mathrm{cmp}+t_{n,k}^\mathrm{com} \leq t_0, \quad \forall u_n \in \mathcal{S}, \thinspace e_k \in \mathcal{K},\\
& \sum_{n \in \{i| u_i \in \mathcal{S}^k\}} B_{n,k} \leq B_0, \quad \forall e_k \in \mathcal{K},\mathcal{S}^k \subseteq \mathcal{S}, \\
& 0 \leq B_{n,k} \leq B_0, \quad \forall u_n \in \mathcal{S}^k, \thinspace e_k \in \mathcal{K}, \\
& \nu_\mathrm{min}\leq\nu_{n}\leq\nu_\mathrm{max}, \quad \forall u_n \in \mathcal{S}, \\
& \bigcup_{k \in \{i| e_i \in \mathcal{K}\} }\mathcal{S}^k = \mathcal{S}, \quad  \mathcal{S} \subseteq \mathcal{U}. 
\end{align}
\end{subequations}

Constraint (15a) represents the EDCR requirement, ensuring that the final model achieves satisfactory performance by maintaining adequate effective data. Constraint (15b) and (15c) are indicator functions that represent the association status between clients and ESs, determining whether a connection is established between them. Constraint (15d) imposes a latency restriction, representing the maximum tolerable delay time for each ES, ensuring that both the transmission rate and latency overhead are within acceptable limits. Constraints (15e) and (15f) establish the bandwidth allocation restrictions on the associated clients. Constraint (15g) characterizes the constraints on computational resources for selected clients. Constraint (15h) represents the relationships between all clients in the social network, the selected clients, and the clients allocated to different ESs.

\section{Differential Privacy For HFL-SNM}
Differential Privacy is a rigorous theoretical framework for privacy protection. Its essence lies in adding carefully crafted random noise to the data processing procedure, guaranteeing that the presence or absence of any single individual record has an almost negligible impact on the probability distribution of the final output. More precisely, for any two neighboring datasets $\mathcal{D}$ and $\mathcal{D}'$, differing by a single record, and any possible output set $\mathcal{R}$, a mechanism $\mathcal{M}$ satisfies $(\epsilon,\delta)$-differential privacy if:
\begin{equation}
    \Pr[\mathcal{M}(\mathcal{D}) \in \mathcal{R}] \leq e^\varepsilon \cdot \Pr[\mathcal{M}(\mathcal{D}') \in \mathcal{R}] + \delta
\end{equation}
where $\epsilon >0$ is the privacy budget controlling the level of indistinguishability between neighboring datasets, and $\delta \in [0,1]$ represents the upper bound of the allowable exception probability. For a given $\delta$, a larger $\epsilon$ value in the privacy-preserving mechanism leads to stronger distinguishability between neighboring datasets, thereby increasing the risk of privacy breaches. 

According to \cite{dwork2014algorithmic}, we present the DP mechanism by adding artificial Gaussian noise. \cite{9069945} has derived has derived that the noise distribution $n \sim \mathcal{N}(0,\sigma^2) $ satisfies $(\epsilon,\delta)$-DP for the uplink and downlink channels in the traditional FL framework. \cite{shi2021hfl} proposes a DP method under the HFL architecture, where DP is adopted in the edge aggregation and global aggregation. In our HFL-SNM framework, we mainly consider the data leakage between clients and ESs including uplink and downlink. To ensure $(\epsilon,\delta)$-DP for cilents in the uplink in one exposure, the noise scale is set $\sigma = \frac{c\Delta s_\mathrm{up}}{\epsilon}$. Accordingly, we present the uplink noise scale as follows:
\begin{align}
\sigma_\mathrm{up}^n = \frac{cC^n \Delta s_{up}^{\mathcal{D}_n}}{\epsilon}, 
\end{align}
where $\Delta s_\mathrm{up}^{\mathcal{D}_n}$ denotes the sensitivity of $\mathcal{D}_n$ in the edge aggregation. $C^n$ represent the number of exposure for local model of $u_n$. $c=\sqrt{2\ln{\frac{1.25}{\delta}}}$ is the scaling coefficient of the Gaussian mechanism, uniquely determined by the failure probability $\delta$.

Let $W$ denote the clipping threshold for bounding model parameters.
\begin{lemma}
The sensitivity for the edge aggregation is given by
\begin{align}
\Delta s_\mathrm{up}^{\mathcal{D}_n} = \frac{2W}{D_n},
\end{align}
\end{lemma}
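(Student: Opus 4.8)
The plan is to bound the $\ell_2$-sensitivity of the edge-aggregation map by analyzing how much the aggregated edge model $\boldsymbol{\omega}_k^{l+1}$ can change when a single record in some client's dataset $\mathcal{D}_n$ is altered. First I would fix the client $u_n$ whose data is perturbed and recall from \eqref{eq:(7)} that the edge model is the data-size-weighted average $\boldsymbol{\omega}_k^{l+1} = \big(\sum_{m} D_m \boldsymbol{\omega}_m^{\lambda_m}\big)\big/\big(\sum_{m} D_m\big)$ over clients $m$ associated with $e_k$. Since changing one record in $\mathcal{D}_n$ affects only the term $\boldsymbol{\omega}_n^{\lambda_n}$ in the numerator, the difference between the aggregated models on neighboring datasets is $\frac{D_n}{\sum_m D_m}\big(\boldsymbol{\omega}_n^{\lambda_n} - \boldsymbol{\omega}_n'^{\lambda_n}\big)$, where $\boldsymbol{\omega}_n'^{\lambda_n}$ is the local model trained on the neighboring dataset.

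Next I would invoke the clipping threshold $W$: each broadcast/uploaded model parameter vector is clipped so that $\|\boldsymbol{\omega}_n^{\lambda_n}\| \le W$, hence by the triangle inequality $\|\boldsymbol{\omega}_n^{\lambda_n} - \boldsymbol{\omega}_n'^{\lambda_n}\| \le 2W$. Substituting gives a sensitivity contribution of $\frac{2W D_n}{\sum_m D_m}$ from client $u_n$. The subtlety here is the normalization used in the definition $\Delta s_\mathrm{up}^{\mathcal{D}_n}$: the sensitivity is expressed \emph{per client} relative to that client's own weighting, i.e. one factors out the aggregation weight $D_n/\sum_m D_m$ that will be reintroduced when composing with the noise-calibration formula $\sigma_\mathrm{up}^n = cC^n \Delta s_\mathrm{up}^{\mathcal{D}_n}/\epsilon$, leaving the intrinsic dataset sensitivity $\Delta s_\mathrm{up}^{\mathcal{D}_n} = 2W/D_n$. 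Concretely, the local objective \eqref{eq:(3)} averages the per-sample loss over $D_n$ samples, so a one-record change perturbs each gradient step by a $1/D_n$-scaled amount; tracing this through the (bounded number of) gradient-descent iterations and applying the clipping bound yields the $1/D_n$ scaling, and the factor $2$ comes from the worst-case two-sided clip.

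The main obstacle will be making the "trace through the gradient-descent iterations" step rigorous: a priori, iterating \eqref{eq:(4)} could amplify a $1/D_n$ per-step perturbation across $\lambda_n$ iterations, so I would rely on the clipping operation to decouple the iterations — clipping the output to norm $W$ caps the accumulated deviation at $2W$ regardless of $\lambda_n$, which is precisely why the bound depends on $W$ and $D_n$ only and not on $\lambda_n$ or the loss geometry. I would state explicitly that the sensitivity is measured on the \emph{clipped} model, so the argument is a one-line norm bound rather than a Lipschitz-stability analysis of SGD. The remaining bookkeeping — confirming that the $D_n/\sum_m D_m$ weight is the right normalization convention for plugging into the Gaussian-mechanism noise scale — is routine given the setup in Section III.
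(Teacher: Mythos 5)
There is a genuine gap: neither of your two sub-arguments actually produces the factor $1/D_n$. Your first argument computes the change in the \emph{edge-aggregated} model, $\frac{D_n}{\sum_m D_m}\bigl(\boldsymbol{\omega}_n^{\lambda_n}-\boldsymbol{\omega}_n'^{\lambda_n}\bigr)$, bounds it by $\frac{2WD_n}{\sum_m D_m}$, and then ``factors out'' the weight $D_n/\sum_m D_m$; but dividing $\frac{2WD_n}{\sum_m D_m}$ by $\frac{D_n}{\sum_m D_m}$ leaves $2W$, not $2W/D_n$. (Indeed, the paper's proof of Lemma~2 treats the aggregation weight in exactly this way for the \emph{downlink} direction, obtaining $\Delta s_\mathrm{down}^{\mathcal{D}_n}=\frac{D_n}{D^{\mathcal{S}^k}}\Delta s_\mathrm{up}^{\mathcal{D}_n}$; the quantity $\Delta s_\mathrm{up}^{\mathcal{D}_n}$ of Lemma~1 is the sensitivity of the raw local model $\boldsymbol{\omega}_n^{\lambda_n}$ itself, with no aggregation weight involved.) Your second argument --- clip the output model to norm $W$ so that two neighboring outputs differ by at most $2W$ ``regardless of $\lambda_n$'' --- likewise yields $2W$ with no $1/D_n$: a norm bound on the whole uploaded model is oblivious to the dataset size, so you cannot simultaneously claim that output clipping is the only tool used and that the bound scales as $1/D_n$.

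The idea the paper actually uses (following Wei \emph{et al.}) is a structural decomposition of the uploaded model, not a perturbation analysis of gradient descent: it writes $s_\mathrm{up}^{\mathcal{D}_n}=\boldsymbol{\omega}_n^{\lambda_n}=\arg\min_{\boldsymbol{\omega}}F_n(\boldsymbol{\omega},\mathcal{D}_n)=\frac{1}{D_n}\sum_{j=1}^{D_n}\arg\min_{\boldsymbol{\omega}}F_n(\boldsymbol{\omega},\mathcal{D}_{n,j})$, i.e.\ the local model is taken to be the empirical average of $D_n$ per-sample minimizers, each clipped to norm at most $W$. Since neighboring datasets differ in exactly one record, exactly one summand changes, and it changes by at most $2W$; the $\frac{1}{D_n}$ prefactor of the average then gives $\Delta s_\mathrm{up}^{\mathcal{D}_n}\le\frac{2W}{D_n}$ in one line. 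This decomposition is itself a modeling assumption (the argmin of an average is not literally the average of per-sample argmins), but it is the step that carries the entire $1/D_n$ dependence, and it is absent from your proposal. Without it you would need a genuine stability analysis of $\lambda_n$ gradient-descent steps on the $\frac{1}{D_n}$-weighted empirical loss --- precisely the analysis you set aside as an ``obstacle.''
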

\begin{proof}
The detailed proof is provided in Appendix B. 
\end{proof}
We use $Q$ to substitute the following parameters to simplify the computation.
\begin{align*}
   Q = (\frac{C^k}{D^{\mathcal{S}^k}})^2-\frac{1}{{m_k}^2} \sum_{n=1}^{m_k} (\frac{C^n}{D_n})^2,
\end{align*}
where $m$ is the number of clients in $\mathcal{S}^k$.
\begin{lemma}
The downlink noise scale is given by
\begin{align}
\sigma_\mathrm{down}^k = 
\begin{cases}
\frac{2cW\sqrt{Q}}{\epsilon} & \mathrm{if} \phantom{k}Q>0 \\
\phantom{kk}0 & \mathrm{otherwise}
\end{cases}
\end{align}
\end{lemma}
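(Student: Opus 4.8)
The plan is to track how the global model $\boldsymbol{\omega}^{T+1}$ of Eq.~(12) depends on the local data sets and bound the worst-case change when a single record is swapped, since the downlink exposure is what CS broadcasts after global aggregation. Writing the global model as a weighted average of edge models, and each edge model (Eq.~(7)) as a weighted average of local models, I would expand $\boldsymbol{\omega}^{T+1}$ directly as a convex combination of the local models $\boldsymbol{\omega}_n^{\lambda_n}$ with weights $D_n/D^{\mathcal{S}}$. The downlink noise must mask the cumulative effect of the $C^n$ exposures of each client's contribution; this is why the expression for $Q$ subtracts the per-client sensitivity terms $(C^n/D_n)^2$ from the aggregate term $(C^k/D^{\mathcal{S}^k})^2$ — the idea is that noise already injected in the uplink (Lemma~1 and the uplink scale $\sigma_\mathrm{up}^n$) partially covers the downlink requirement, so only the residual needs to be added, and when that residual is non-positive no extra noise is needed.

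First I would fix an edge server $e_k$ and a neighboring pair $\mathcal{D}_n,\mathcal{D}_n'$ differing in one sample, and compute the $\ell_2$ sensitivity of the quantity that CS releases, using the clipping threshold $W$ from the paragraph preceding Lemma~1 so that each $\|\boldsymbol{\omega}_n^{\lambda_n}\| \le W$; this gives a per-record change of order $2W/D_n$ (consistent with Lemma~1) propagated through the aggregation weight $D_n/D^{\mathcal{S}^k}$. Next I would account for the number of times $C^n$ each client's model is exposed and the number $C^k$ of exposures of the edge model, and assemble the total variance budget needed downlink as proportional to $(C^k/D^{\mathcal{S}^k})^2 W^2$, then subtract the variance already supplied by the uplink Gaussian noise, which contributes $\tfrac{1}{m_k^2}\sum_{n=1}^{m_k}(C^n/D_n)^2 W^2$ after it passes through the edge averaging with $m_k$ clients. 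Collecting the constants $c=\sqrt{2\ln(1.25/\delta)}$ and the factor $2W$ yields $\sigma_\mathrm{down}^k = 2cW\sqrt{Q}/\epsilon$ whenever the residual $Q>0$, and $0$ otherwise, which is exactly the claimed piecewise form; I would then invoke the Gaussian mechanism guarantee of \cite{dwork2014algorithmic} to conclude $(\epsilon,\delta)$-DP.

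The main obstacle I anticipate is getting the bookkeeping of the exposure counts and the averaging denominators exactly right: one must be careful whether the uplink noise is added per client before or after scaling by $D_n/D^{\mathcal{S}^k}$, whether the edge model inherits noise that then gets re-scaled at the global step by $D^{\mathcal{S}^k}/D^{\mathcal{S}}$, and how the $m_k$ clients' independent noises combine in quadrature — these factors are what produce the $1/m_k^2$ and the squared ratios inside $Q$. A secondary subtlety is justifying the subtraction rigorously: it is really a statement that the downlink release, conditioned on the already-noised uplink messages, only needs enough \emph{additional} independent Gaussian noise to reach the target variance, so the argument should be phrased in terms of the total noise seen by an adversary observing the broadcast, and one must confirm that when $Q\le 0$ the uplink noise alone already dominates the required downlink sensitivity, making $\sigma_\mathrm{down}^k=0$ valid. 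Once the variance accounting is pinned down, the rest is a direct application of the standard Gaussian-mechanism sensitivity-to-noise formula.
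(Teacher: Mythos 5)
Your proposal matches the paper's proof in all essentials: the paper defines the downlink release as the edge aggregation $\boldsymbol{\omega}_k=\frac{1}{D^{\mathcal{S}^k}}\sum_i D_i\boldsymbol{\omega}_i^{\lambda_i}$, bounds its sensitivity by propagating the per-client bound $2W/D_n$ through the weight $D_n/D^{\mathcal{S}^k}$ to get $2W/D^{\mathcal{S}^k}$, sets the required total scale $\sigma_k=cC^k\Delta s_\mathrm{down}^k/\epsilon$, and obtains the residual $\sigma_\mathrm{down}^k=\sqrt{\sigma_k^2-\frac{1}{m_k^2}\sum_i(\sigma_\mathrm{up}^i)^2}$ — exactly your variance-subtraction argument, including the zero case when $Q\le 0$. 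The only cosmetic difference is that the paper's downlink is the ES-to-client broadcast of the edge model rather than the CS global model, which your second paragraph already handles correctly by fixing $e_k$.
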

\begin{proof}
The detailed proof is provided in Appendix C. 
\end{proof}

\section{Problem Solution}
In this section, we will solve the problem analytically for the proposed objective function. Clearly, the problem is a mixed integer nonlinear program, and its combinatorial optimisation part falls under the category of NP-hard problems.Our objective is to select a suitable clients set $\mathcal{S}$ from $\mathcal{U}$ and allocate the clients to $\mathcal{K}$. The solution process involves considering multiple variables simultaneously, including the number of scheduled clients, the effective data, the redundant data, and the spatial relationship between clients and ESs. 

To tackle the problem, we decompose it into three sub-problems. We define the first sub-problem as \textbf{Resource allocation}, denoted as \textbf{P1}, where the optimal bandwidth and CPU frequency need to be solved for each client and ES. Based on the solution of \textbf{P1}, we propose \textbf{Edge association} as second sub-problem, denoted as \textbf{P2}, to obtain the optimal association solution. Finally, based on the \textbf{P1} and \textbf{P2}, we introduce \textbf{Client selection} as third sub-problem, denoted as \textbf{P3}, which is equal to the original problem to minimize the total energy consumption.
\subsection{Resource allocation}
First, we consider \textbf{P1} for a single ES. To simplify the objective function expression, we use $X_n$ and $Y_n$ to represent certain parameters, as detailed below.
\begin{align*}
X_{n}&=\lambda_n C{D}_n,\\
Y_{n}&=\frac{z}{\log_{2}\left(1+\frac{h_{n,k}p_n}{B_{n,k}N_{0}}\right)}.
\end{align*}

According to \textbf{P}, the constraint (15d) is a constraint on latency, we convert it to a bandwidth and frequency representation. According to (5) and (10), we derive the
constraint $\frac{X_n}{\nu_n}+\frac{Y_n}{B_{n,k}}\leq t_0$. Then we obtain \textbf{P1} for minimizing the overhead of a single ES $e_k$ under one global round as follows 
\begin{subequations}\label{eq:(16)}
\begin{align}
 \textbf{ P1:}\phantom{E} &\min_{\{ B_{n,k},\nu_n\}} E_k  = \tau\sum_{n \in \{i| u_i \in \mathcal{S}^k\}}\left(\frac{\alpha}{2} X_n\nu_{n}^{2}+\frac{Y_n p_n}{B_{n,k}}\right) \notag\\
\text{s.t.} \quad &\frac{X_n}{\nu_n}+\frac{Y_n}{B_{n,k}}\leq t_0, \quad \forall u_n \in \mathcal{S}, \thinspace e_k \in \mathcal{K},\\
& \nu_\mathrm{min}\leq\nu_{n}\leq\nu_\mathrm{max}, \quad \forall u_n \in \mathcal{S}, \\
& \sum_{n \in \{i| u_i \in \mathcal{S}^k\}} B_{n,k} \leq B_0, \quad \forall e_k \in \mathcal{K}, \mathcal{S}^k \subseteq \mathcal{S}, \\
& 0 \leq B_{n,k} \leq B_0, \quad \forall u_n \in \mathcal{S}^k, \thinspace e_k \in \mathcal{K}, \\
& k \in \{i| e_i \in \mathcal{K}\}, \mathcal{S}^k \subseteq \mathcal{S} \subseteq \mathcal{U}. 
\end{align}
\begin{lemma}
\textbf{P1} is convex.
\end{lemma}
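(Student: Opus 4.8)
The plan is to verify convexity directly from the definition by showing that the objective $E_k$ is a convex function of the decision variables $(B_{n,k}, \nu_n)_{n}$ and that the feasible set cut out by (16a)--(16e) is convex. Since the sum over clients decouples in the objective and in constraints (16a), (16b), (16d), while (16c) couples only the $B_{n,k}$ linearly, it suffices to analyze each client's contribution and then invoke closure of convexity under nonnegative sums and intersections.

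First I would treat the objective. For a fixed client $n$, the term $\tfrac{\alpha}{2}X_n \nu_n^2$ is a positive multiple of a convex quadratic in $\nu_n$, hence convex; the term $\tfrac{Y_n p_n}{B_{n,k}}$ is a positive constant times $1/B_{n,k}$ on the domain $B_{n,k}>0$, which is convex. Note that $X_n$, $Y_n$, $p_n$ are all positive constants here (they do not depend on the optimization variables, since $p_n$ and the distances are fixed within a global round), so no cross terms arise and the Hessian of each summand is diagonal with nonnegative entries. Summing over $n$ and multiplying by $\tau>0$ preserves convexity, so $E_k$ is convex on the positive orthant.

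Next I would handle the constraints. Constraints (16b), (16d), (16e) are linear (box/range constraints and set-membership bookkeeping), hence define convex sets. Constraint (16c), $\sum_n B_{n,k}\le B_0$, is linear, hence convex. The only nontrivial one is (16a): $\tfrac{X_n}{\nu_n}+\tfrac{Y_n}{B_{n,k}}\le t_0$. Here $g(\nu_n,B_{n,k}) = X_n/\nu_n + Y_n/B_{n,k}$ is a sum of two convex functions (each a positive constant over a positive variable), hence convex, so its sublevel set $\{g\le t_0\}$ is convex. The feasible region of \textbf{P1} is the intersection of all these convex sets, hence convex. A convex objective over a convex feasible set gives a convex program, which establishes the lemma.

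I do not expect a serious obstacle; the only point requiring a moment's care is confirming that $Y_n$ is genuinely a constant with respect to the optimization variables — one must recall that $h_{n,k}$ depends on the fixed transmission frequency $f_n$ and the distance $dist_{n,k}$ (both fixed per round) and that $p_n$ is a fixed transmission power, so $Y_n$ in (16a) and in the objective is indeed data-independent of $B_{n,k}$ and $\nu_n$. Granting that, the argument is a routine application of the standard convexity-preserving operations (nonnegative weighted sums, pointwise sublevel sets of convex functions, and intersections), and I would present it compactly as above.
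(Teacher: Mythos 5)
There is a genuine gap, and it sits exactly at the point you flagged as "requiring a moment's care": $Y_n$ is \emph{not} a constant with respect to the optimization variables. By its definition in the paper, $Y_n = z/\log_2\bigl(1+\tfrac{h_{n,k}p_n}{B_{n,k}N_0}\bigr)$, so it depends on $B_{n,k}$ through the logarithm. Consequently the objective term $\tfrac{Y_n p_n}{B_{n,k}}$ is not a positive constant times $1/B_{n,k}$; it is the communication energy $\tfrac{z p_n}{B_{n,k}\log_2(1+h_{n,k}p_n/(B_{n,k}N_0))}$, and likewise $\tfrac{Y_n}{B_{n,k}}$ in constraint (20a) is the communication latency $t_{n,k}^{\mathrm{com}}$, not a reciprocal of an affine function. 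Establishing convexity of this term in $B_{n,k}$ is precisely the non-trivial content of the lemma, and your argument skips it by treating it as the elementary function $c/B_{n,k}$.

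The paper closes this gap by substituting $x = \tfrac{B_{n,k}N_0}{h_{n,k}p_n}$, reducing the term to a positive multiple of $\Psi(x) = \tfrac{1}{x\ln(1+1/x)}$, and verifying $\Psi''(x)>0$ by direct computation. An alternative (and arguably cleaner) repair along the lines of your own framework: the rate $r(B) = B\log_2(1+c/B)$ is the perspective of the concave function $\log_2(1+c\,\cdot\,)$, hence positive and concave in $B>0$, and the reciprocal of a positive concave function is convex; this gives convexity of both $t_{n,k}^{\mathrm{com}}$ and $p_n t_{n,k}^{\mathrm{com}}$ in $B_{n,k}$ without the second-derivative computation. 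Everything else in your proposal --- the per-client decoupling, the convex quadratic in $\nu_n$, the linearity of (16b)--(16e), and the closure of convexity under nonnegative sums and intersections of sublevel sets --- is sound and matches the paper's structure, but without the corrected treatment of the $B_{n,k}$-dependence the proof does not go through.
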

\begin{proof}
The detailed proof is provided in Appendix D.
\end{proof}
\end{subequations}
To tackle the convex optimization problem, we derive the Karush-Kuhn-Tucker (KKT) conditions with Lagrange multiplier approach. The Lagrange formulation for \textbf{P1} is given by
{\footnotesize
\begin{align}\label{eq:(17)}
  L_k &= \tau\sum_{n \in \{i| u_i \in \mathcal{S}^k\}}(\frac{\alpha}{2}X_n\nu_n^2+\frac{Y_np_n}{B_{n,k}})+\mu_k\left(\sum_{n \in \{i| u_i \in \mathcal{S}^k\}}B_{n,k}-B_0\right) \notag \\
 &\phantom{\tau}+\sum_{n \in \{i| u_i \in \mathcal{S}^k\}}\gamma_n(\nu_\mathrm{min}-\nu_n)+\sum_{n \in \{i| u_i \in \mathcal{S}^k\}}\sigma_n(\nu_n-\nu_\mathrm{max}) \notag \\
 &\phantom{\tau}+\sum_{n \in \{i| u_i \in \mathcal{S}^k\}}\theta_n(\frac{X_n}{\nu_n}+\frac{Y_n}{B_{n,k}}-t_0),
\end{align}
}
where $\mu_k\geq 0,\theta_n\geq 0,\sigma_n\geq 0, \gamma_n\geq 0$ are Lagrange multipliers.
\begin{lemma}
the KKT conditions are described as 
\begin{subequations}
\begin{align}\label{eq:(18)}
\theta_n&=\tau\alpha \nu_{n}^3+\frac{(\sigma_n-\gamma_n)\nu_n^2}{X_n}, \\
\mu_k&=\frac{\tau p_n + \theta_n}{z\ln2}(\frac{Y_n}{B_{n,k}})^2(\ln (1+\frac{h_{n,k}p_n}{B_{n,k}N_0}) - \frac{1}{1+\frac{B_{n,k}N_0}{h_{n,k}p_n}}), \\
B_0&=\sum_{n \in \{i| u_i \in \mathcal{S}^k\}}B_{n,k}.
\end{align}
\end{subequations}
\end{lemma}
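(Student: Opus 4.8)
The plan is to invoke the convexity of \textbf{P1} established in the preceding lemma: for a convex program satisfying Slater's condition the KKT conditions are necessary and sufficient for optimality, so it suffices to write out stationarity and complementary slackness for the Lagrangian $L_k$ in \eqref{eq:(17)} and reduce them to the claimed form. The Lagrangian is separable across the associated clients $u_n\in\mathcal{S}^k$ except for the coupling term $\mu_k\sum_n B_{n,k}$, so I would treat the two blocks of primal variables, $\nu_n$ and $B_{n,k}$, in turn. First I would impose $\partial L_k/\partial\nu_n=0$; since $Y_n$ does not depend on $\nu_n$ this reads $\tau\alpha X_n\nu_n-\gamma_n+\sigma_n-\theta_n X_n/\nu_n^2=0$, and multiplying through by $\nu_n^2/X_n$ and isolating $\theta_n$ gives the first line of \eqref{eq:(18)}.

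Next I would impose $\partial L_k/\partial B_{n,k}=0$. Setting $\phi(B)=B\log_2\!\bigl(1+h_{n,k}p_n/(BN_0)\bigr)$ so that $Y_n/B_{n,k}=z/\phi(B_{n,k})$ and $Y_np_n/B_{n,k}=zp_n/\phi(B_{n,k})$, the $B_{n,k}$-dependent part of $L_k$ is $(\tau p_n+\theta_n)z/\phi(B_{n,k})+\mu_k B_{n,k}$ up to constants, whose derivative equated to zero is $\mu_k=(\tau p_n+\theta_n)z\,\phi'(B_{n,k})/\phi(B_{n,k})^2$. A direct differentiation yields $\phi'(B)=\frac{1}{\ln 2}\bigl(\ln(1+h_{n,k}p_n/(BN_0))-\frac{1}{1+BN_0/(h_{n,k}p_n)}\bigr)$; substituting this and using $z/\phi(B_{n,k})^2=(Y_n/B_{n,k})^2/z$ to eliminate $\phi$ converts the stationarity relation into the second line of \eqref{eq:(18)}.

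Finally, for the third line I would show the aggregate-bandwidth constraint is active, which follows from $\mu_k>0$: the bracketed factor in the second line equals $\frac{1}{\ln 2}\bigl(\ln(1+x)-x/(1+x)\bigr)$ with $x=h_{n,k}p_n/(B_{n,k}N_0)>0$, and the elementary inequality $\ln(1+x)>x/(1+x)$ for $x>0$---immediate because the difference vanishes at $x=0$ and has derivative $x/(1+x)^2>0$---makes it strictly positive, while $\tau p_n+\theta_n>0$; hence $\mu_k>0$ and complementary slackness $\mu_k\bigl(\sum_n B_{n,k}-B_0\bigr)=0$ forces $\sum_n B_{n,k}=B_0$ (equivalently, the objective is strictly decreasing in each $B_{n,k}$ and the latency constraint only relaxes as $B_{n,k}$ grows, so an optimal allocation wastes no bandwidth). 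The main obstacle is the $B_{n,k}$-stationarity step: one must differentiate the Shannon-rate term carefully---keeping in mind that $Y_n$ itself depends on $B_{n,k}$---and then repackage the result into the compact expression claimed; the $\nu_n$ block and the complementary-slackness argument are comparatively routine.
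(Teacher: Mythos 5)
Your proposal is correct and follows essentially the same route as the paper: differentiate the Lagrangian with respect to $\nu_n$ and $B_{n,k}$, set the stationarity conditions to zero and rearrange to obtain (22a) and (22b), then observe that $\tau p_n+\theta_n>0$ together with $\ln(1+x)>x/(1+x)$ forces $\mu_k>0$, so complementary slackness makes the total-bandwidth constraint binding and yields (22c). Your $\phi(B)$ bookkeeping is just a tidier way of carrying out the same differentiation (correctly accounting for the $B_{n,k}$-dependence of $Y_n$), and your explicit positivity argument for $\mu_k$ fills in a step the paper leaves implicit.
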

\begin{proof}
The detailed proof is provided in Appendix E. 
\end{proof}
 
\begin{lemma}\label{lemma2}
If the constraint (20a) on client $u_n$ is slack, then $\nu_n^* = \nu_{min}$.
\end{lemma}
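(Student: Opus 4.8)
The plan is to argue by contradiction, exploiting the structure of the KKT conditions from Lemma~5 together with the complementary-slackness relations that accompany them. Suppose constraint (20a) is slack for client $u_n$, i.e. $\frac{X_n}{\nu_n}+\frac{Y_n}{B_{n,k}} < t_0$. Complementary slackness then forces the associated multiplier $\theta_n = 0$. Substituting $\theta_n = 0$ into (20a) (the KKT stationarity relation for $\nu_n$) gives $0 = \tau\alpha\nu_n^3 + \frac{(\sigma_n-\gamma_n)\nu_n^2}{X_n}$, and since $\nu_n>0$ and $X_n>0$ this simplifies to $\sigma_n - \gamma_n = -\tau\alpha X_n \nu_n < 0$, hence $\gamma_n > 0$.

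Next I would invoke complementary slackness for the lower-bound constraint (16b): the multiplier $\gamma_n$ pairs with the term $\gamma_n(\nu_{\min}-\nu_n)$, so $\gamma_n > 0$ implies $\nu_{\min} - \nu_n = 0$, i.e. $\nu_n^* = \nu_{\min}$. That is exactly the claimed conclusion, so the argument closes. I would also note in passing that $\gamma_n>0$ is consistent with $\sigma_n$: since $\gamma_n>0$ we cannot simultaneously have $\sigma_n>0$ (that would force $\nu_n=\nu_{\max}>\nu_{\min}$, a contradiction), so $\sigma_n = 0$ and the relation reads $\gamma_n = \tau\alpha X_n\nu_{\min}$, a clean positive value — a useful sanity check though not strictly needed for the statement.

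The one subtlety worth spelling out is why $\theta_n=0$ follows: this is the standard KKT complementary-slackness condition $\theta_n(\frac{X_n}{\nu_n}+\frac{Y_n}{B_{n,k}}-t_0)=0$ attached to the latency constraint in the Lagrangian (18), which is legitimate because Lemma~4 established that \textbf{P1} is convex (and the constraints are affine/Slater-type, so strong duality and the KKT conditions hold). The only genuine obstacle is a degenerate edge case: if $\nu_{\min}=\nu_{\max}$ the claim is vacuous, and one should implicitly assume $\nu_{\min}<\nu_{\max}$ so that the step "$\gamma_n>0 \Rightarrow \sigma_n = 0$ and $\nu_n=\nu_{\min}$" is unambiguous; under the standing hardware assumption $\nu_{\min}\le\nu_n\le\nu_{\max}$ with a nontrivial interval this is automatic. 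Everything else is a direct substitution into the already-derived KKT system, so I expect the proof to be short.
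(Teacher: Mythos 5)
Your proof is correct and rests on exactly the same ingredients as the paper's: the stationarity relation $\theta_n = \tau\alpha\nu_n^3 + \frac{(\sigma_n-\gamma_n)\nu_n^2}{X_n}$ combined with the complementary-slackness conditions for $\theta_n$, $\gamma_n$, and $\sigma_n$. The paper argues by contradiction (assume $\nu_n>\nu_{\min}$, deduce $\gamma_n=0$ and hence $\theta_n>0$, forcing the latency constraint to bind), whereas you run the identical chain forward (slack $\Rightarrow \theta_n=0 \Rightarrow \gamma_n>\sigma_n\geq 0 \Rightarrow \nu_n=\nu_{\min}$); this is just the contrapositive arrangement and, if anything, is slightly cleaner since it avoids the paper's case split at $\nu_n=\nu_{\max}$.
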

\begin{proof}
The detailed proof is provided in Appendix F. 
\end{proof}

\begin{lemma}\label{lemma3}
Let $\Gamma_n = \frac{X_n}{\nu_n}+\frac{Y_n}{B_{n,k}}-t_0$.
 Then $\Gamma_n$ is monotonically decreasing with respect to $\nu_n$ while increasing with respect to $\nu_i, i \neq n$.
\end{lemma}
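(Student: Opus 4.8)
The plan is to make explicit the two channels through which $\Gamma_n$ depends on the CPU-frequency vector $\boldsymbol\nu=(\nu_1,\dots,\nu_{m_k})$ of the clients associated with $e_k$: a direct channel through the computation time $t_n^{\mathrm{cmp}}=X_n/\nu_n$, and an indirect channel through the communication time $t^{\mathrm{com}}_{n,k}=Y_n/B_{n,k}=\phi_n(B_{n,k})$, where I write $\phi_n(B):=z/\big(B\log_2(1+h_{n,k}p_n/(BN_0))\big)$ and $B_{n,k}=B^\star_{n,k}(\boldsymbol\nu)$ is the bandwidth returned by the resource-allocation problem \textbf{P1} (convex by Lemma~3) once the frequencies are held fixed. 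First I would record two elementary facts to be reused throughout: $X_n/\nu_n$ is strictly decreasing in $\nu_n$, and $\phi_n$ is strictly decreasing and convex in $B$ — precisely the monotonicity/convexity observations already used in the proof of Lemma~3, so they can simply be quoted, together with the corollary that $\phi_n^{-1}$ is well defined and strictly decreasing on its range. Substituting $\Gamma_n=X_n/\nu_n+\phi_n\big(B^\star_{n,k}(\boldsymbol\nu)\big)-t_0$ and differentiating, the whole statement reduces to signing $\partial B^\star_{n,k}/\partial\nu_n$ and $\partial B^\star_{n,k}/\partial\nu_i$ and combining them with $\phi_n'<0$ and the negative direct term $-X_n/\nu_n^2$.

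The second step is to extract enough structure on $B^\star(\boldsymbol\nu)$ from the KKT characterisation of Lemma~4 and the dichotomy of Lemma~5. Two facts drive everything: the budget is saturated, $\sum_n B^\star_{n,k}=B_0$, so the coordinates of $B^\star$ are coupled; and the latency constraint of $u_n$ is equivalent to the lower bound $B^\star_{n,k}\ge\underline B_n(\nu_n)$ with $\underline B_n(\nu_n):=\phi_n^{-1}\big(t_0-X_n/\nu_n\big)$, which is strictly decreasing in $\nu_n$ because $t_0-X_n/\nu_n$ increases in $\nu_n$ and $\phi_n^{-1}$ decreases. I would then split according to Lemma~5. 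On the part of parameter space where $u_n$'s latency constraint is active, $B^\star_{n,k}=\underline B_n(\nu_n)$ forces $\Gamma_n=0$ identically, and a small frequency perturbation keeps the constraint feasible, so the asserted monotonicities hold trivially there. On the part where it is slack, Lemma~5 gives $\nu_n=\nu_{\min}$, and the stationarity relation of Lemma~4 — in which $\theta_n=0$ by complementary slackness on the latency constraint — reads $\mu_k=-\tau p_n\,\phi_n'(B^\star_{n,k})$, exhibiting $B^\star_{n,k}$ as a strictly decreasing function of the single budget shadow price $\mu_k$. It then remains to see how $\mu_k$ moves under a perturbation: changing $\nu_i$ shifts the floor $\underline B_i$ and therefore re-tightens or re-slackens the common budget, which moves $\mu_k$ monotonically; propagating this through $B^\star_{n,k}(\mu_k)$ and then through $\phi_n$ pins down the sign of $\partial\Gamma_n/\partial\nu_i$, and for the $\nu_n$ direction the same accounting shows the indirect term has the same sign as, and hence cannot overturn, the direct term $-X_n/\nu_n^2<0$.

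The main obstacle, where essentially all the work lies, is the comparative statics of this coupled ``water-filling'' allocation: rigorously signing $\partial B^\star_{n,k}/\partial\nu_i$ and $\partial\mu_k/\partial\nu_i$. The cleanest route is to differentiate the KKT system of Lemma~4 implicitly, which is legitimate because each $\phi_n$ is strictly convex (so the reduced Hessian is positive definite) and the active set is locally constant; to avoid differentiability at the breakpoints where a client enters or leaves the active set, one can instead argue directly on \textbf{P1} by an exchange/optimality argument — perturb $\nu_i$, observe that the feasible box for $B_{n,k}$ shifts only monotonically, and show by trading bandwidth between clients that the corresponding coordinate of the minimiser moves in the claimed direction. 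A secondary, purely bookkeeping point is to enumerate the boundary cases where some $\nu_j$ sits at $\nu_{\min}$ or $\nu_{\max}$ or where a latency constraint switches between active and slack, but Lemma~5 already constrains the active-set structure enough to make this a finite case check. Once the bandwidth response is signed, the chain-rule identities of the first step close the argument in a line.
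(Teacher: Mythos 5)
Your overall strategy is the same as the paper's: both arguments reduce the lemma to signing the response of the KKT-determined bandwidth $B_{n,k}$ to changes in the frequency vector, using exactly the three structural facts you identify — the budget constraint $\sum_n B_{n,k}=B_0$ is binding, the multiplier $\mu_k$ is common to all clients of $e_k$, and the per-client stationarity condition (22b) has a right-hand side that is strictly monotone in $B_{n,k}$ (via the auxiliary function $\Phi_n$ with $\Phi_n'<0$). The difference is that the paper actually closes the comparative-statics step you defer: it enumerates the possible directions of movement of $B_{n,k}$ and of $\mu_k$ under a perturbation of $\nu_n$ and eliminates all but one by contradiction (a constant $B_{n,k}$ would force $\mu_k$ to change for $u_n$ but not for $u_i$, impossible since $\mu_k$ is shared; a decreasing $B_{n,k}$, or an increasing $B_{n,k}$ with non-increasing $\mu_k$, would violate $\sum_n B_{n,k}=B_0$), concluding that $B_{n,k}$ increases in $\nu_n$ and decreases in $\nu_i$, $i\neq n$, after which the chain rule finishes as you describe. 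Your two proposed routes (implicit differentiation of the KKT system, or an exchange argument on \textbf{P1}) would plausibly deliver the same signs, but as written the proposal leaves precisely the step where, by your own admission, ``essentially all the work lies'' unexecuted, so it is a plan rather than a proof of the decisive inequality.

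Two smaller points. First, note that $\nu_n$ enters client $n$'s stationarity equation only through $\theta_n=\tau\alpha\nu_n^3+(\sigma_n-\gamma_n)\nu_n^2/X_n$; this is the channel the paper exploits, and your slack-branch identity with $\theta_n=0$ erases that dependence, so the branch structure you set up does not track the mechanism that actually drives the result. Second, your active-constraint branch, where you claim $\Gamma_n=0$ identically so that ``the asserted monotonicities hold trivially,'' does not establish monotonicity at all (a constant function is not what Lemma~7 needs when it chains the strict inequalities $\Gamma_n(\nu_{\min},\nu_{\min})\le\Gamma_n(\nu_{\min},\nu_i^*)$), and it conflates the constraint being active at the optimum with the behaviour of $\Gamma_n$ as the frequencies are varied away from it, which is the regime in which the lemma is actually applied inside the AO algorithm.
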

\begin{proof}
The detailed proof is provided in Appendix G. 
\end{proof}
Combining the KKT conditions and the aforementioned Lemmas, the AO algorithm is proposed to solve the closed-form solution of \textbf{P1}. The AO algorithm is divided into four parts, described as follows.
\begin{algorithm}
\DontPrintSemicolon
\KwIn{Related parameters of clients in $\mathcal{S}^k$. $h_{i,k},p,\alpha,C,N_0$.}
\KwOut{Optimal Frequency $\nu_i^*$, Optimal Bandwidth $B_{i,k}^*$}
Initialize the CPU frequency $\nu_i = \nu_\mathrm{min}$ for $u_i$ in $\mathcal{S}^k$\;
Initialize slack constraint clients set $\mathcal{L}^k $, binding constraint clients set $\mathcal{T}^k$\;
\While{$\mathcal{L}^k < |\mathcal{S}^k|$}{
    \For{$u_i$ in $\mathcal{T}^k$}{
        calculate $\nu_i$ and $B_{i,k}$ under KKT conditions \;
        $\mathcal{L}^k + \{u_i\}$, $\mathcal{T}^k - \{u_i\}$
    }
    \For{$u_j$ in $\mathcal{S}^k$}{
        Calculate $\Gamma_n$\;
        \If{$\Gamma_n > 0$}{
            $\mathcal{L}^k - \{u_j\}$\;
            $\mathcal{T}^k + \{u_j\}$
        }
    }
}
$\nu_i^* =  \nu_i$, $B_{i,k}^* =B_{i,k}$ for $u_i$ in $\mathcal{S}^k$\;
\Return{$\nu_i^*,B_{i,k}^*$}\;
\caption{Alternating Optimization}
\end{algorithm}
\subsubsection{Step1}
We denote the set of clients with slack constraint (20a) as as $\mathcal{L}^k$. We assume the constraint (20a) is slack for all clients in the beginning. Then we have $\nu_n =\nu_\mathrm{min}$, $ n \in \{i| u_i \in \mathcal{S}^k\}$.
\subsubsection{Step2}
Substitute $\nu_n$ into $\Gamma_n$ and determine the sign of $\Gamma_n$.
\begin{lemma}
After Step2, if $\Gamma_n > 0$, the constraint (19a) on $u_n$ is binding.
\end{lemma}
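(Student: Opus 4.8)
The plan is a proof by contradiction, with Lemma~\ref{lemma2} doing the main work. Suppose the claim fails: at some optimal solution $(\nu^*,B^*)$ of \textbf{P1} the latency constraint~(20a) on $u_n$ is \emph{slack}, i.e.\ $\tfrac{X_n}{\nu_n^*}+\tfrac{Y_n}{B_{n,k}^*}<t_0$. Lemma~\ref{lemma2} then forces $\nu_n^*=\nu_\mathrm{min}$, which is precisely the frequency that Step~1 assigns to every client and that Step~2 substitutes into $\Gamma_n$. Hence the only way the Step~2 sign test and the optimal point could disagree is through the bandwidth variable $B_{n,k}$.

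I would then control that bandwidth with the cap $B_{n,k}\le B_0$ from~(20d): since $Y_n/B_{n,k}$ is strictly decreasing in $B_{n,k}$, every feasible allocation satisfies $\tfrac{X_n}{\nu_\mathrm{min}}+\tfrac{Y_n}{B_{n,k}}\ge\tfrac{X_n}{\nu_\mathrm{min}}+\tfrac{Y_n}{B_0}$, and the right-hand side is the quantity evaluated in Step~2 when $\Gamma_n$ is formed at the largest admissible bandwidth $B_0$. Thus $\Gamma_n>0$ yields $\tfrac{X_n}{\nu_n^*}+\tfrac{Y_n}{B_{n,k}^*}\ge\tfrac{X_n}{\nu_\mathrm{min}}+\tfrac{Y_n}{B_0}>t_0$, contradicting feasibility and hence the assumed slackness. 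So $\nu_n^*=\nu_\mathrm{min}$ is impossible; since $\nu_n^*\neq\nu_\mathrm{min}$, the contrapositive of Lemma~\ref{lemma2} says the latency constraint of $u_n$ cannot be slack, i.e.\ it is binding. Lemma~\ref{lemma3} — monotonicity of $\Gamma_n$ in $\nu_n$ and in the other $\nu_i$ — is the safeguard here: it guarantees that the coupling through the shared budget $\sum_m B_{m,k}\le B_0$ (no other client turning binding can hand $u_n$ enough spare bandwidth to drop its delay below $t_0$ while $\nu_n$ stays at $\nu_\mathrm{min}$) does not undo the argument.

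The invocation of Lemma~\ref{lemma2} and the inequality bookkeeping are routine. The step I expect to be the main obstacle is making the second paragraph rigorous — certifying that the bandwidth at which Step~2 evaluates $\Gamma_n$ is no smaller than any admissible $B_{n,k}^*$, so that the verdict $\Gamma_n>0$ rules out feasibility at $\nu_n=\nu_\mathrm{min}$ for \emph{all} budget splits rather than for a single allocation — and doing so against the KKT equality $B_0=\sum_m B_{m,k}$. Once that coupling is handled, the contradiction with Lemma~\ref{lemma2} finishes the proof.
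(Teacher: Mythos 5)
Your skeleton matches the paper's: show $\nu_n^*>\nu_{\min}$ and then invoke the contrapositive of Lemma~\ref{lemma2} (slack $\Rightarrow$ $\nu_n^*=\nu_{\min}$) to conclude the latency constraint is binding. But the step you yourself flag as the obstacle is where the argument genuinely fails. Step~2 does \emph{not} evaluate $\Gamma_n$ at $B_{n,k}=B_0$; it evaluates $\Gamma_n$ at the bandwidth allocation produced by the KKT conditions for the all-$\nu_{\min}$ frequency profile, and those bandwidths sum to $B_0$, so each individual $B_{n,k}^{(0)}$ is in general strictly below $B_0$. Consequently the hypothesis $\Gamma_n>0$ tells you $\frac{X_n}{\nu_{\min}}+\frac{Y_n}{B_{n,k}^{(0)}}>t_0$ with $B_{n,k}^{(0)}\le B_0$, from which $\frac{X_n}{\nu_{\min}}+\frac{Y_n}{B_0}>t_0$ does \emph{not} follow --- the bound $\frac{Y_n}{B_{n,k}^{(0)}}\ge\frac{Y_n}{B_0}$ points in the wrong direction for your chain. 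So your claimed infeasibility ``for all budget splits'' collapses: client $n$ might well meet the deadline at $\nu_{\min}$ if handed enough of the other clients' bandwidth, and the crude cap $B_{n,k}\le B_0$ cannot rule this out.

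The missing ingredient is precisely the content of Lemma~\ref{lemma3}, which you relegate to a vague ``safeguard'' role instead of making it the engine. Because $\Gamma_n$ (with the KKT bandwidth folded into it) is increasing in each $\nu_i$, $i\ne n$, and $\nu_i^*\ge\nu_{\min}$, one has
\begin{equation*}
\Gamma_n(\nu_{\min},\nu_{-n}^*)\;\ge\;\Gamma_n(\nu_{\min},\ldots,\nu_{\min})\;>\;0\;\ge\;\Gamma_n(\nu_n^*,\nu_{-n}^*),
\end{equation*}
where the last inequality is feasibility of the optimum. Since $\Gamma_n$ is decreasing in $\nu_n$, this squeeze forces $\nu_n^*>\nu_{\min}$, and the contrapositive of Lemma~\ref{lemma2} then gives bindingness. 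That two-line comparison is the paper's proof; replacing your $B_0$ bound with it repairs the argument.
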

\begin{proof}
The detailed proof is provided in Appendix H. 
\end{proof}
\subsubsection{Step3}
We define the clients with binding constraint (20a) as $\mathcal{T}^k$. Based on the lemma4, we obtain $\mathcal{T}^k$. According to $\mathcal{L}^k$ and KKT conditions, we derive the necessary condition of the $\nu_i^*$ and $B_{i,k}^*$ for $u_i$ in $\mathcal{S}^k$. 
\subsubsection{Step4}
Substituting the derived  necessary condition of $\nu_i^*$ and $B_{i,k}^*$ into $\Gamma_n$, repeat step2 and step3 until there is no client that satisfies $\Gamma_n > 0$. Therefore, we can derive the constraint conditions of all clients and obtain the optimal frequency allocation $\nu_n^*$ and $B_{n,k}^*$ for all clients.

Substituting the optimal parameter into the objective function of \textbf{P1}, we get the minimal energy consumption of $e_k$ as 
\begin{equation}\label{eq:(19)}
E_k^* =\tau\sum_{n \in \{i| u_i \in \mathcal{S}^k\}}\left(\frac{\alpha}{2} X_n{\nu_{n}^*}^{2}+\frac{Y_np_n}{B_{n,k}^*}\right).
\end{equation}
Once the clients scheduled by each ES are determined, the AO algorithm is utilized to obtain the closed-form solution under the constraints of \textbf{P1}. Compared to traditional algorithms, the AO algorithm has a lower complexity. When traversing the relaxed states of the constraint (20a) for all clients, the time complexity is $\mathcal{O}(2^m)$, where $m$ is the number of clients scheduled by a single ES. While in the AO algorithm, each client only needs to evaluate the relaxed state of constraint (20a) once, so the maximum time complexity is $\mathcal{O}(m)$. Therefore, the AO algorithm is highly efficient.

\subsection{Edge association}
Based on \textbf{P1}, we proceed to address the optimization solution of \textbf{P2}. By referencing \textbf{P} and incorporating the results of \textbf{P1}, we derive \textbf{P2} as follows.
\begin{subequations}\label{eq:(21)}
\begin{align}
 \textbf{ P2:}\phantom{E_k} &\min_{\{\mathcal{S}^k\}} \phantom{a}E_\mathrm{total}  =  \sum_{k \in \{i| \mathcal{S}^i \subseteq \mathcal{S}\}}E_k^*  \notag\\
\text{s.t.} \quad &\sum_{n \in \{i| u_i \in \mathcal{S}^k\}}\delta_{n,k} = 1, \quad \forall \mathcal{S}^k \subseteq \mathcal{S}, \thinspace e_k \in \mathcal{K}, \\
& \bigcup_{e_k \in \mathcal{K}} \mathcal{S}^k = \mathcal{S}, \quad \forall \mathcal{S} \subseteq \mathcal{U}.
\end{align}
\end{subequations}
\textbf{Edge association} is a classic optimization problem in the field of wireless communications, which is challenging to solve due to its NP-hard nature. While traditional heuristic algorithms \cite{pearl1984heuristics,holland1992adaptation,kirkpatrick1983optimization,glover1989tabu} suffer from prohibitive computational complexity when handling large-scale client access. To address this limitation, we propose a Fast Greedy algorithm based on the AO algorithm. The specific steps are as follows.
\begin{itemize}
    \item \textbf{Data Sorting} We sort the clients in $\mathcal{S}$ by $D_n$ in descending order, yielding $\mathcal{S}'$.
    \item \textbf{Sever Selection} Traverse the clients in $\mathcal{S}'$ and associate each client with the ESs that currently minimizes $E_\mathrm{total}$.
\end{itemize}
\begin{algorithm}
\DontPrintSemicolon
\KwIn{The cilent selection $\mathcal{S}$.}
\KwOut{The optimal association scheme ${\mathcal{S}^k}^*$.}
Initialize the total energy consumption $E_\mathrm{total} = 0$\;
$\mathcal{S}'$ = Sorted$(\mathcal{S}$, $\mathrm{key}= D_n)$\;
\For{$u_n $ in $ \mathcal{S}' $}
{
    $\Delta E = \infty $\;
    \For{$e_k$ in $\mathcal{K}$}
    {
        calculate $E_\mathrm{total}'$ by AO\;
        $\Delta E = \min \{\Delta E,E_\mathrm{total}'-E_\mathrm{total}\}$\;
    }
    $E_\mathrm{total} = E_\mathrm{total}+\Delta E$\;
    $k=\underset{{k'}\in \{i|e_i \in \mathcal{K}\}}{{\arg\min} \, } E_\mathrm{total}'-E_\mathrm{total}$\;
    ${\mathcal{S}^k}^*+\{u_n\}$
}
\Return{${\mathcal{S}^k}^*$}\;
\caption{Fast Greedy Algorithm}
\end{algorithm}
The proposed algorithm is mathematically described below:
{\footnotesize
\begin{equation}
\delta_{n,k} = 
\begin{cases}
1, &  \mathrm{if} \phantom{x} k= \underset{{k'}\in \{i|e_i \in \mathcal{K}\}}{{\arg\min} \, } E_{k'}^*(\mathcal{S}^{k'} \bigcup \{u_n\}) + \sum_{j \neq {k'}} E_j^*(\mathcal{S}^j) , \\
0, & \mathrm{otherwise},
\end{cases}
\end{equation}}

Although traversing all solutions of \textbf{P2} can find the optimal solution, the time complexity reaches $\mathcal{O}(K^M)$. In contrast, the FG algorithm offers a more efficient alternative with a significantly reduced time complexity of $\mathcal{O}(M^2)$. This makes the FG algorithm much more scalable and practical for real-time applications, while still providing results that are close to the optimal solution.

\subsection{Client selection}
Based on the solutions of \textbf{P1} and \textbf{P2}, \textbf{P3} is proposed to select clients from the social network to minimize total energy consumption, which is equivalent to \textbf{P}. According to (23) and (24), \textbf{P3} is characterized as 
\begin{subequations}\label{eq:(22)}
\begin{align}
 \textbf{ P3:}\phantom{E_k} &\min_{\{\mathcal{S}\}} \phantom{k}E_\mathrm{total}^* =  \sum_{k \in \{i| {\mathcal{S}^i}^* \subseteq \mathcal{S}\}}\tau\sum_{n \in \{j| u_j \in \mathcal{S}^k\}}E_k^* \phantom{E_kE_k} \notag \\
\text{s.t.} \quad &\mathcal{S} \subseteq \mathcal{U} \\
& r_\mathrm{ef} \geq r_\mathrm{ef}^0 
\end{align}
\end{subequations}
The $E_\mathrm{total}^*$ represents the minimal energy consumption in one global round, which is calculated by optimal frequency $\nu_n^*$ and bandwidth allocation $B_{n,k}^*$ in \textbf{P1} and optimal clients allocation ${\mathcal{S}^k}^*$ in \textbf{P2}. Let $\mathcal{B}_\mathcal{S}$ denote the upper bound of $E_\mathrm{total}^*$, which is derived as

{\footnotesize
\begin{align*}
E_\mathrm{total}^* &= \sum_{ k \in \{i| \mathcal{S}^i \subset \mathcal{S}\}}\tau\sum_{n \in \{j| u_j \in \mathcal{S}^k\}}(\frac{\alpha}{2}\lambda_n C (\nu_n^*)^2 {D}_n + p_n t_{n,k}^\mathrm{com} )  \\
& \leq  \sum_{k \in \{i| \mathcal{S}^i \subset \mathcal{S}\}}\tau\sum_{n \in \{j| u_j \in \mathcal{S}^k\}}(\frac{\alpha}{2}\lambda_n C (\nu_n^*)^2 {D}_n + p_n(t_0-t_{n,k}^\mathrm{cmp}) ) \notag \\
& =  \sum_{ k \in \{i| \mathcal{S}^i \subset \mathcal{S}\}}\tau\sum_{n \in \{j| u_j \in \mathcal{S}^k\}}(\frac{\alpha}{2}\lambda_n C (\nu_n^*)^2 {D}_n  + p_n(t_0-\frac{\lambda_n C D_n}{\nu_n^*}) ) \notag \\
& \leq  \sum_{ k \in \{i| \mathcal{S}^i \subset \mathcal{S}\}}\tau\sum_{n \in \{j| u_j \in \mathcal{S}^k\}}(\frac{\alpha}{2}\lambda_n C \nu_\mathrm{max}^2{D}_n+p_n(t_0-\frac{\lambda_n C D_n}{\nu_\mathrm{max}})) \notag \\
& = \tau\sum_{n \in \{i| u_i \in \mathcal{S}\}}(\frac{\alpha}{2}\lambda_n C \nu_\mathrm{max}^2{D}_n+p_n(t_0-\frac{\lambda_n C D_n}{\nu_\mathrm{max}})) \notag \\
& = \mathcal{B}_\mathcal{S}.
\end{align*}}

We have proved that the model performance is affected by effective data and redundant data. Therefore, we introduce $\epsilon_\mathcal{S} = \frac{{D}_{\mathrm{ef}}^{\mathcal{S}}}{{D}_{\mathrm{re}}^{\mathcal{S}}}$  as a metric to represent the model performance. A larger $\epsilon_\mathcal{S}$ indicates better model performance.

From the perspective of the number of clients, if the number of scheduled clients exceeds a certain threshold, it becomes infeasible to ensure that all clients satisfy constraint (19a), resulting in the infeasibility of \textbf{P}. To address this, we use an averaging method to estimate the upper bound on the number of scheduled clients. Let $\bar{\mathcal{H}}$ denote the upper bound on the number of clients scheduled by a single ES and $\mathcal{H} = \lfloor K\bar{\mathcal{H}}\rfloor$ as the total upper bound, which can be solved by
\begin{equation}\label{eq:(23)}
    \frac{\lambda_n C \bar{{
D}}}{\bar{\nu}} +\frac{z}{\bar{B}\ln \left(1+\frac{\bar{h} \bar{p}}{\bar{B} N_0} \right)} = t_0,
\end{equation}
where $\bar{{
D}} = \frac{\sum_{n \in \{j| u_j \in \mathcal{U}\}}{{
D}}_{n}}{|\mathcal{U}|}$, $\bar{B} = \frac{B_0}{\bar{\mathcal{H}}}$. $\bar{h}$ is calculated based on the average distance of all clients from their communicable ES. And $C$, $\bar{\nu}$, $\bar{p}$ are averages taken from a settable range. 

\begin{algorithm}
\DontPrintSemicolon
\KwIn{The lower bound and upper bound of scheduled clients, the EDCR constraint $\mathcal{L}, \mathcal{H}$, $r_\mathrm{ef}^{0}$.}
\KwOut{The optimal client selection $\mathcal{S}^*$ }
Initialize the Remaining Clients $\mathcal{R} = \mathcal{U}$, $G_\mathrm{min}=\infty$, $\mathcal{P} = None$\;

\For{$M = \mathcal{L}$ \textbf{to} $\mathcal{H}$}{
    \For{$x = 1$ \textbf{to} $\xi$}{
        $M' = M $, $\mathcal{R} = \mathcal{U}$, $\mathcal{S}=None$\;
        \While{$M' > 0 $}{
    Initialize $\bar{{D}_\mathrm{ef}} = \frac{{D}_\mathrm{ef}^{\mathcal{R}}r_\mathrm{ef}^{0}}{M_r}$, $\mathcal{F} = None$\;
    \For{$u_n$ in $\mathcal{R}$}{
        \If{${D}_\mathrm{ef}^{\{u_n\}}\geq \bar{{D}_\mathrm{ef}}$}{
        $\mathcal{F}$ + $u_n$ \;
         }
    }
    \If{$\mathcal{F} = None$}{
        $u^* = {\{u_n:{D}_\mathrm{ef}^{\{u_n\}} = \max_{u_i \in \mathcal{R}}{D}_\mathrm{ef}^{\{u_i\}}\}}$ \;
    }
    \Else{
        Randomly select a client from $\mathcal{FC}$ as $u^*$\;
    }
    $\mathcal{S}$ + $\{u^*\}$,$\mathcal{R}$ - $\{u^*\}$,$M' - 1$
}
    $\mathcal{P}$ + $\{\mathcal{S}\}$
    }
}
\For{$\mathcal{S}$ in $\mathcal{P}$}{
    \If{$G_\mathcal{S} \leq G_\mathrm{min}$}{
    $G_\mathrm{min} = G_\mathcal{S}$\;
    $\mathcal{S}^*= \mathcal{S}$
    }
}
\Return{$\mathcal{S}^*$}\;
\caption{Performance-Energy Metric Optimization}
\end{algorithm}
According to the upper bound $\mathcal{H}$, we can obtain the maximum EDCR through Greedy Algorithm. Thus we set the EDCR constraint $ r_\mathrm{ef}^{0} \leq r_\mathrm{ef}^\mathrm{max}$ to avoid the unsolvable condition as much as possible. Similarly, the lower bound of the schedulable number of clients is obtained through Greedy algorithm, denoted as $\mathcal{L}$. Then the actual number of scheduled number $M$ is in the range of $[{\mathcal{L}},{\mathcal{H}}]$.

We introduce a metric factor $G_\mathcal{S}$ to evaluate the quality of $\mathcal{S}$, which is defined as 
\begin{equation}\label{eq:27}
    G_\mathcal{S}=\frac{\epsilon_\mathrm{max}\mathcal{B}_\mathcal{S}}{\epsilon_\mathcal{S}\mathcal{B}_\mathrm{max}}.
    \end{equation}
$G_\mathcal{S}$ is positively correlated with $\mathcal{B}_\mathcal{S}$ and negatively correlated with $\epsilon_\mathcal{S}$. Thus a smaller $G_\mathcal{S}$ indicates better model performance and lower energy consumption. Based on the analysis, the PEMO algorithm is proposed, which is shown in \textbf{Algorithm 2}.  
\begin{itemize}
    \item \textbf{Scheme Generation}: Let M vary from $\mathcal{L}$ to $\mathcal{H}$. For each value of $M$, $\xi$ distinct selections $\mathcal{S}$ are generated from $\mathcal{U}$, each of which satisfy constraint(19a). Let $\mathcal{P}$ denote the set of all feasible selections. 
    \item \textbf{Metric Comparison}: Compare the values of $G_\mathcal{S}$ across various selections in $\mathcal{P}$. The selection with the smallest $G_\mathcal{S}$ is $\mathcal{S}^*$.
\end{itemize}
PEMO algorithm is also a heuristic algorithm similar to genetic algorithms, which selects the best solutions from multiple generations of candidate solutions. Its time complexity is $\mathcal{O}(\xi(\mathcal{H}-\mathcal{L}+1)M^2)$, while the time complexity of traversing all solutions is $\mathcal{O}(\frac{N!}{M!(N-M)!})$. In comparison, the PEMO algorithm significantly optimizes the complexity.

The sequential execution of PEMO, FG, and AO constitutes the DO-SNM algorithm. Since the PEMO algorithm is independent for P1 and P2, the overall time complexity of the DO-SNM algorithm is $\mathcal{O}((\xi(\mathcal{H}-\mathcal{L}+1)+1)M^2)$
The experiments in the following section have shown that DO-SNM provides an excellent solution, keeping energy consumption at a low level while ensuring the model performance.

\section{Experiment}
We consider 4 ESs in a square area $1000$m $\times$ $1000$m, and 60 clients with density 2 are randomly distributed in the coverage area of the 4 ESs. The detailed parameters setting is provided in Table I. 
\begin{table*}[h!]
\caption{Experimental parameters setting.}
\centering
\renewcommand{\arraystretch}{1.5} 
\begin{tabular}{|>{\centering\arraybackslash}m{2cm}|>{\centering\arraybackslash}m{4cm}|>{\centering\arraybackslash}m{2cm}|>{\centering\arraybackslash}m{4cm}|}
\hline
\textbf{Parameter} & \textbf{Value} & \textbf{Parameter} & \textbf{Value} \\ \hline
$\tau$ & 1 & $p_n$ & $0.1 \sim 1$W \\ \hline
 $\alpha$ & $2 \times 10^{-28}$ & $f_n$ & $1 \sim 4$GHz \\ \hline
$B_0$ & 10MHz& $N_0$ & $4 \times 10^{-21}$W/Hz \\ \hline
$\nu_{min}$ & 1GHz& $\nu_{max}$ & 10GHz \\ \hline
$\eta^\mathrm{CIFAR-10}$ & 0.0005  & $\eta^\mathrm{IEMOCAP}$ & 0.0002 \\ \hline
$C^\mathrm{CIFAR-10}$ & 90822 cycles/sample& $C^\mathrm{IEMOCAP}$ & 15137 cycles/sample \\ \hline
$t_0^\mathrm{CIFAR-10}$ & 0.2s& $t_0^\mathrm{IEMOCAP}$ & 0.1s \\ \hline

\end{tabular}
\label{tab:notations}
\end{table*}

\begin{figure*}[htbp] 
\centering
\subfloat[CIFAR-10]{
    \begin{minipage}[b]{.4\linewidth}
        \centering
        \includegraphics[scale=0.37]{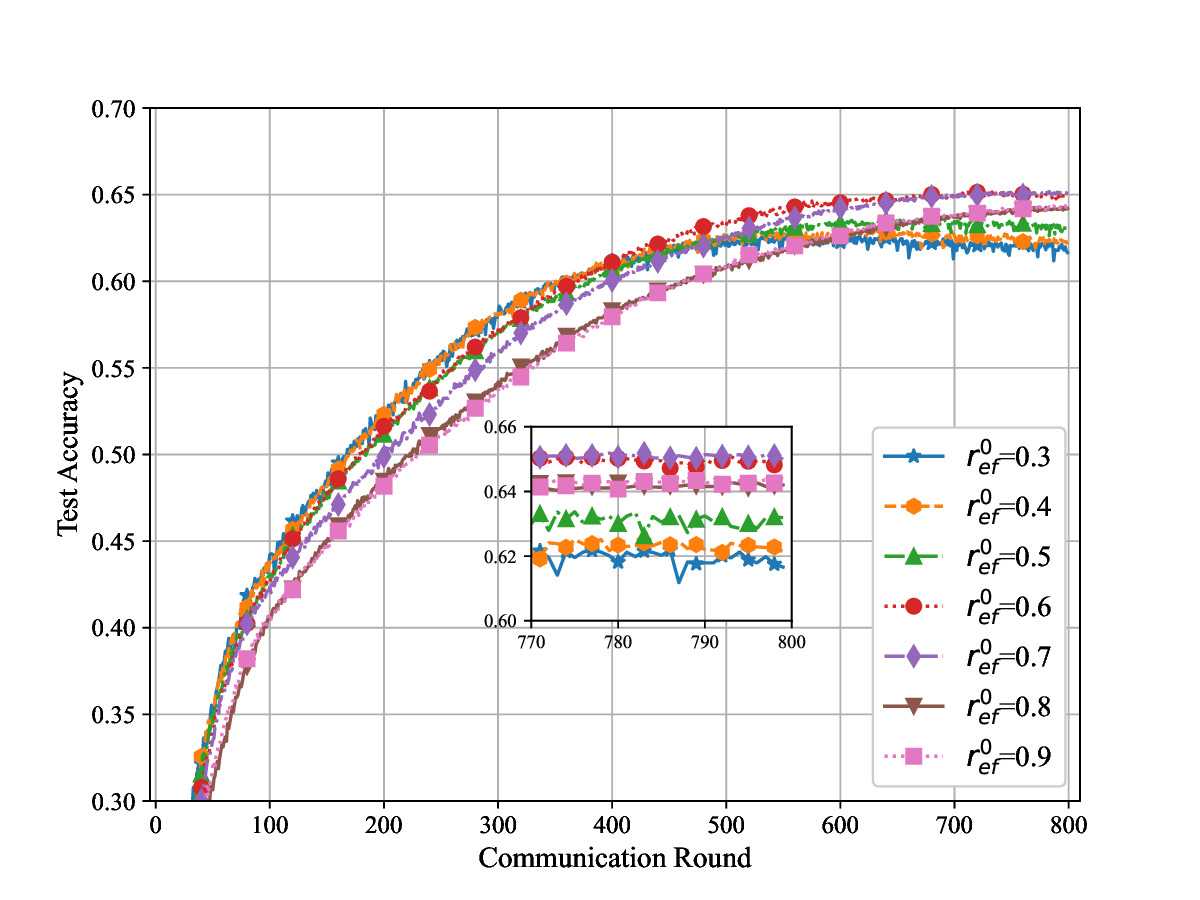}
        \end{minipage}
} 
\subfloat[IEMOCAP]{%
    \begin{minipage}[b]{.4\linewidth}
        \centering
        \includegraphics[scale=0.37]{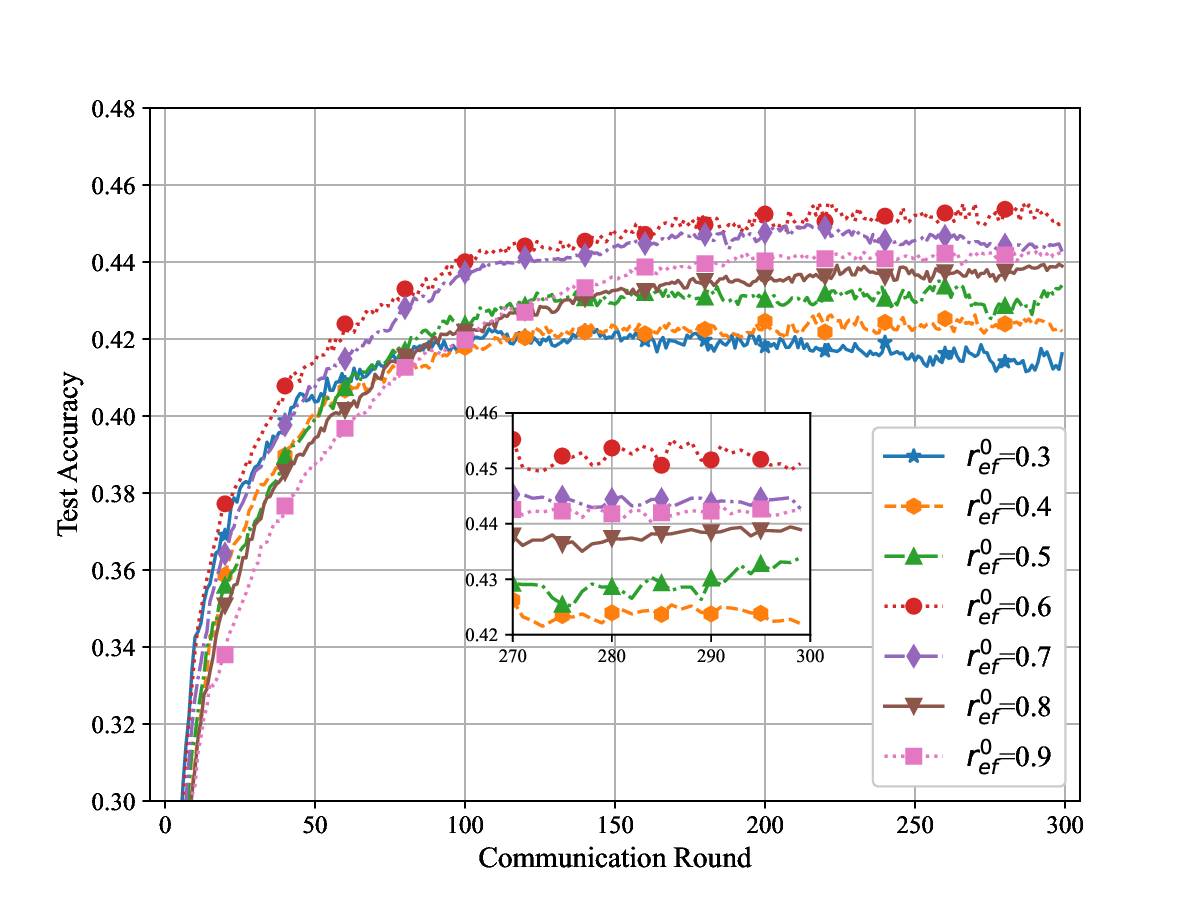}
        \end{minipage}
} \\
\subfloat[CIFAR-10]{%
    \begin{minipage}[b]{.4\linewidth}
        \centering
        \includegraphics[scale=0.37]{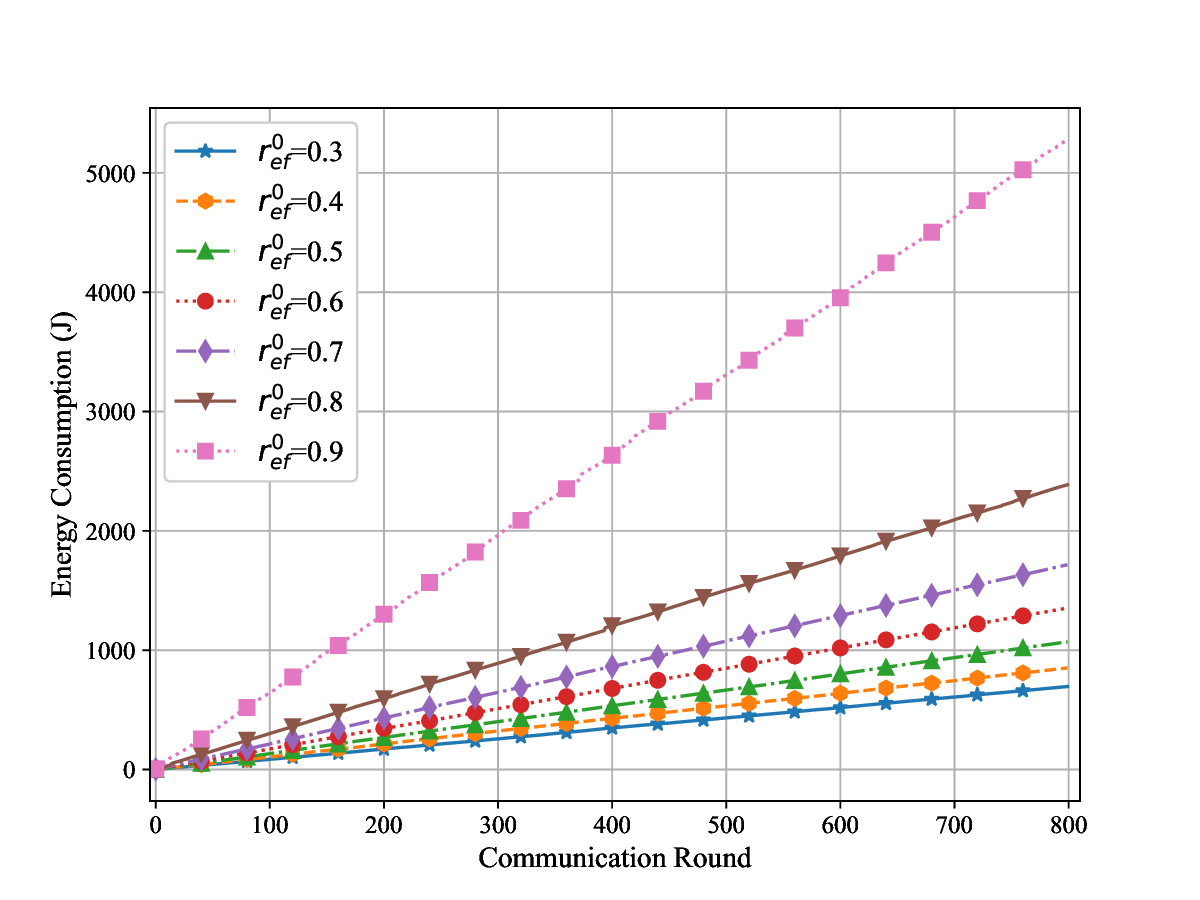}
        \end{minipage}
} 
\subfloat[IEMOCAP]{%
    \begin{minipage}[b]{.4\linewidth}
        \centering
        \includegraphics[scale=0.37]{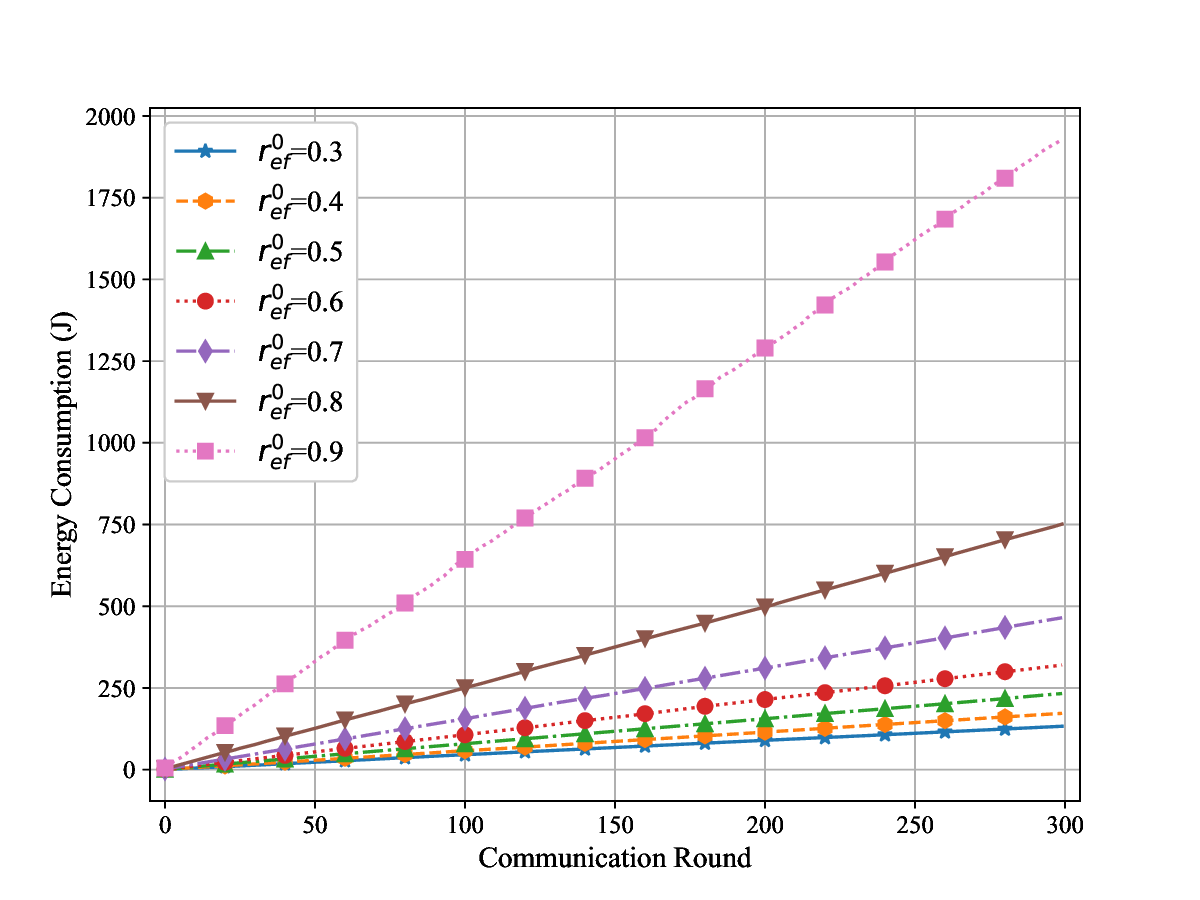}
        \end{minipage}
} 
\caption{Test accuracy and energy consumption of CIFAR-10 and IEMOCAP datasets under different $r_\mathrm{ef}^0$}
\end{figure*}
Considering that the social content of actual clients including information such as voice, images and text, we apply two datasets CIFAR-10 and IEMOCAP to evaluate our proposed algorithm. We employ the Dirichlet distribution to construct Non-IID datasets\cite{hsu1909measuring}. For each client, we generate a class distribution vector parameterized by $\rho$, then sample and allocate instances from each category according to the proportions specified in the vector. The value of $\rho$ determines the degree of distribution skewness - smaller $\rho$ values result in more heterogeneous data partitions. In this work, we set $\rho = 0.6$ to establish Non-IID data distributions for both the CIFAR-10 and IEMOCAP datasets.

The CIFAR-10 dataset consists of 60,000 color images of size $32$ $\times$ $32$ pixels, with 50,000 training images and 10,000 test images. Each image has 3 color channels (RGB). We employ a convolutional neural network (CNN) model to perform image classification on the CIFAR-10 dataset. Our model consists of three convolutional layers with $5$ $\times$ $5$ kernels (the first two with 32 filters, the last with 64 filters), each followed by a max-pooling layer with a $3$ $\times$ $3$ window. This is followed by one flatten layer and two fully connected layers (respectively with 64 units and 10 units). 

The IEMOCAP dataset is a widely used dataset for emotion recognition tasks, specifically designed to analyze emotional expressions in multimodal settings. The dataset includes both text, audio and video features, making it suitable for emotion recognition tasks in the audio-visual domain. In this experiment, we focus on the text features. Each sentence is preprocessed into a 100 dimensional vector.
We employ a recurrent neural network (RNN) model based on a Long Short-Term Memory (LSTM) architecture. Our model comprises two LSTM layers (with a hidden dimension of 50 units, the dropout rate of 0.1), one layer normalization, and one fully connected layer with 7 units. 
\begin{figure*}[htbp] 
\centering
\subfloat[CIFAR-10]{
    \begin{minipage}[b]{.4\linewidth}
        \centering
        \includegraphics[scale=0.37]{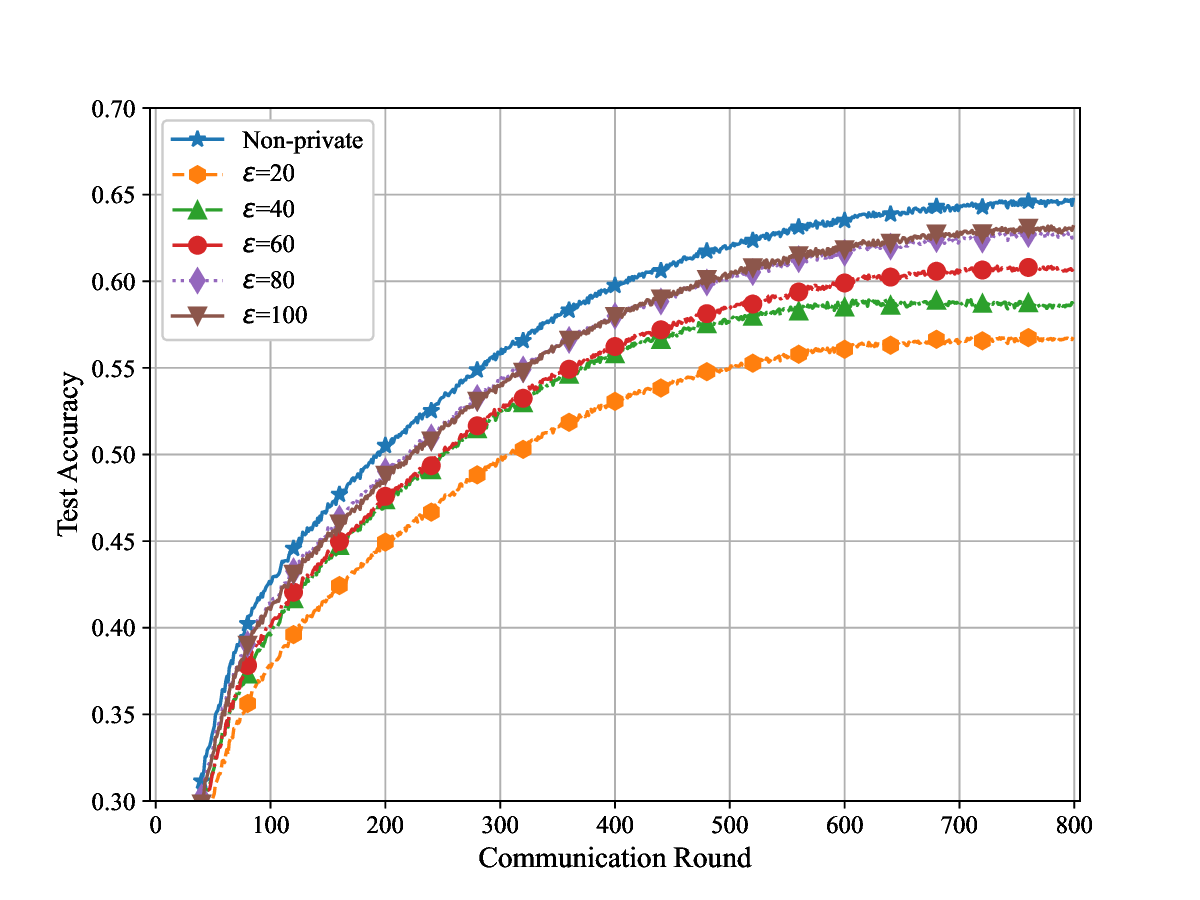}
        \end{minipage}
}
\subfloat[IEMOCAP]
{
    \begin{minipage}[b]{.4\linewidth}
        \centering
        \includegraphics[scale=0.37]{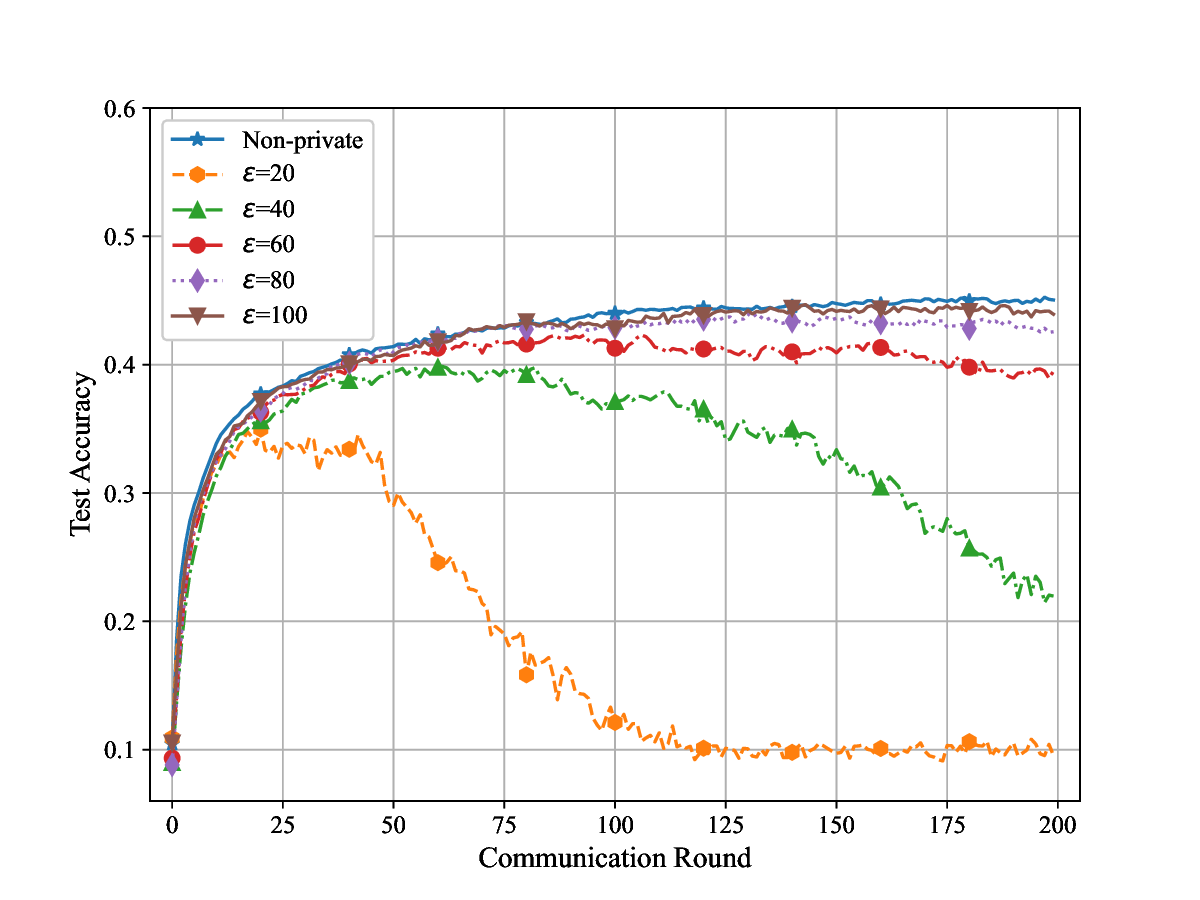}
        \end{minipage}
}
\caption{Test accuracy of CIFAR-10 and IEMOCAP datasets under different $\epsilon$.}
\end{figure*}
\subsection{Trade-off between Performance and Energy}
Fig.3 shows that the test accuracy and energy consumption of the DO-SNM algorithm on the CIFAR-10 and IEMOCAP datasets under varying $r_\mathrm{ef}^0$ values. We find that a small $r_\mathrm{ef}^0$ ($0.3, 0.4, 0.5$) results in lower energy consumption but leads to poor test accuracy. This is due to the poor effective data and Non-IID data distribution, which prevents the model from adequately learning the data across all clients. Large $r_\mathrm{ef}^0$ ($0.8, 0.9$) leads to increased energy consumption while providing a improvement in performance. This occurs because, as the number of scheduled clients increases, more effective data and redundant data is introduced into the training process. Consequently, the model learns adequate information from clients while repeatedly trains redundant data, which eventually causes overfitting. We observe that with $r_\mathrm{ef}^0$ values of 0.6 and 0.7, the communication overhead remains relatively moderate while achieving strong accuracy performance. This suggests that under this setting, the proposed algorithm strikes a trade-off between accuracy and energy consumption. From the perspective of energy, we choose $r_\mathrm{ef}^0 = 0.6$ as the optimal EDCR.

Therefore, setting $r_\mathrm{ef}^0$ too low compromises model performance, whereas excessively high values increase energy consumption and exacerbate overfitting, which reduces the model's generalization capability. Experimental results show that setting $r_\mathrm{ef}^0 = 0.6$ achieves a trade-off between test accuracy and energy overhead. In subsequent algorithm comparisons, $r_\mathrm{ef}^0$ is consistently set to 0.6 to ensure fair evaluation conditions.

\subsection{Evaluation on Protection Levels}
Fig. 4 demonstrates the impact of differential privacy guarantees on model performance by evaluating test accuracy under varying protection levels ($\epsilon$ ranging from 40 to 100) in our DO-SNM algorithm. For benchmarking purposes, a non-private baseline is included. The results exhibit a characteristic privacy-accuracy tradeoff: test accuracy monotonically decreases as privacy protection strengthens (with smaller $\epsilon$, the curve exhibit overfitting behavior). Based on this observation, we select $\epsilon=80$ as the operating point for subsequent experiments, to achieve a balanced compromise between privacy preservation and model accuracy.

\subsection{Performance of DO-SNM}
We evaluated DO-SNM against the following four baseline algorithms on CIFAR-10 and IEMOCAP datasets under the configuration of $r_\mathrm{ef}^0 = 0.6$ and $\epsilon = 80$. As shown in Fig. 5, DO-SNM demonstrated significantly faster convergence than the baselines, achieving test accuracies of $63\%$ and $44\%$ respectively while consuming substantially less energy.
\begin{itemize}
    \item[\textbullet] \textbf{Random Allocation (RA)\cite{huang2020efficiency}}:
    Randomly select clients from $\mathcal{U}$, then randomly assign them to $K$ ES. Bandwidth is evenly distributed among the clients, and frequency is randomly allocated within the feasible range.
    \item[\textbullet] \textbf{Location Greedy (LG)\cite{luo2020hfel}}: 
    Randomly select clients from $\mathcal{U}$, then each client chooses the nearest ES to upload model. Use \textbf{Algorithm 1} to allocate resources.
    \item[\textbullet] \textbf{Redundancy Driven (RD)}:
    Select clients from $\mathcal{U}$ to reach the least redundant data, then utilize FG algorithm to associate the ES with the clients and apply \textbf{Algorithm 1} to allocate resources.
    \item[\textbullet] \textbf{Effectiveness  Driven (ED)}: Select clients from $\mathcal{U}$, ensuring that the clients cover the maximum amount of effective data. Then, utilize FG algorithm to associate the ES with the clients and apply \textbf{Algorithm 1} to allocate resources.
\end{itemize}

\begin{figure*}[htbp]
\centering
\subfloat[CIFAR-10]
{
    \begin{minipage}[b]{.4\linewidth}
        \centering
        \includegraphics[scale=0.37]{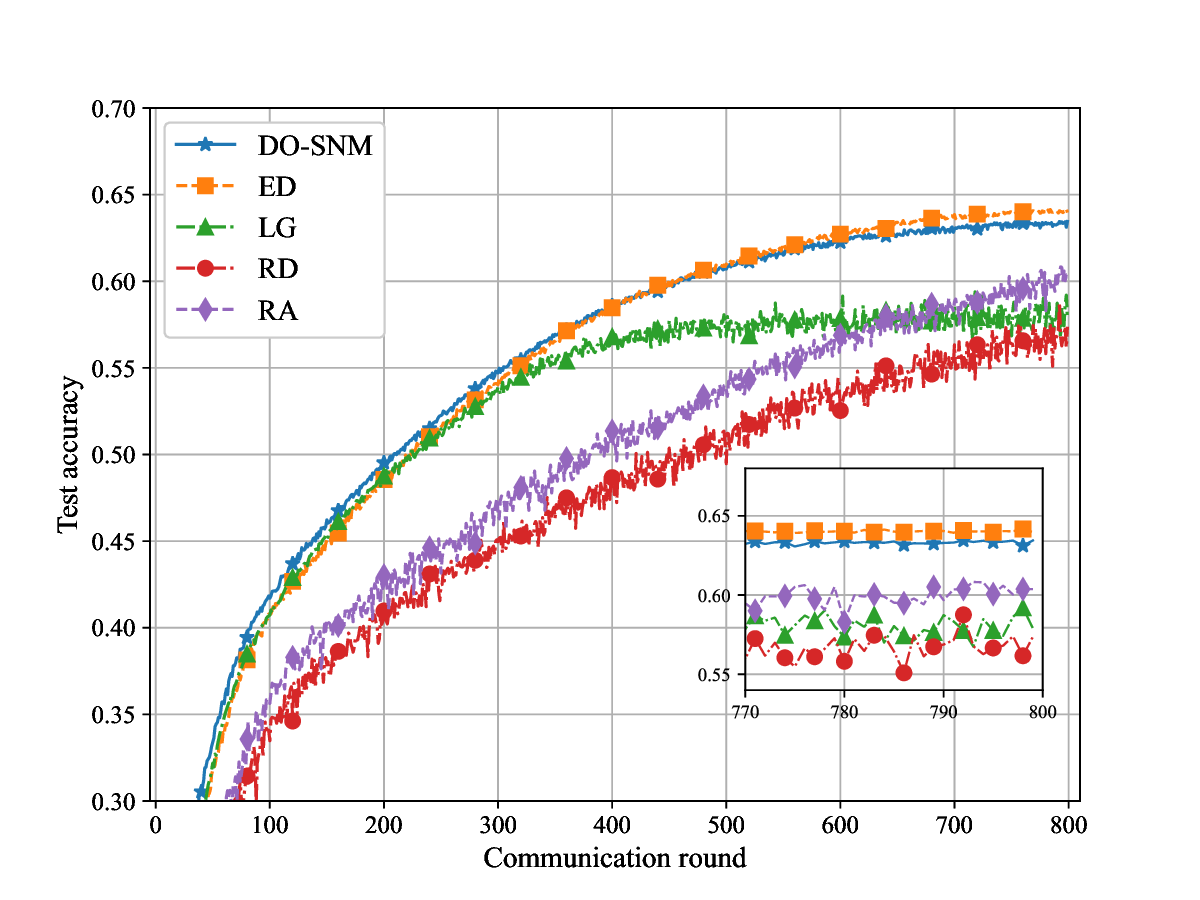}
        \end{minipage}
}
\subfloat[IEMOCAP]
{
    \begin{minipage}[b]{.4\linewidth}
        \centering
        \includegraphics[scale=0.37]{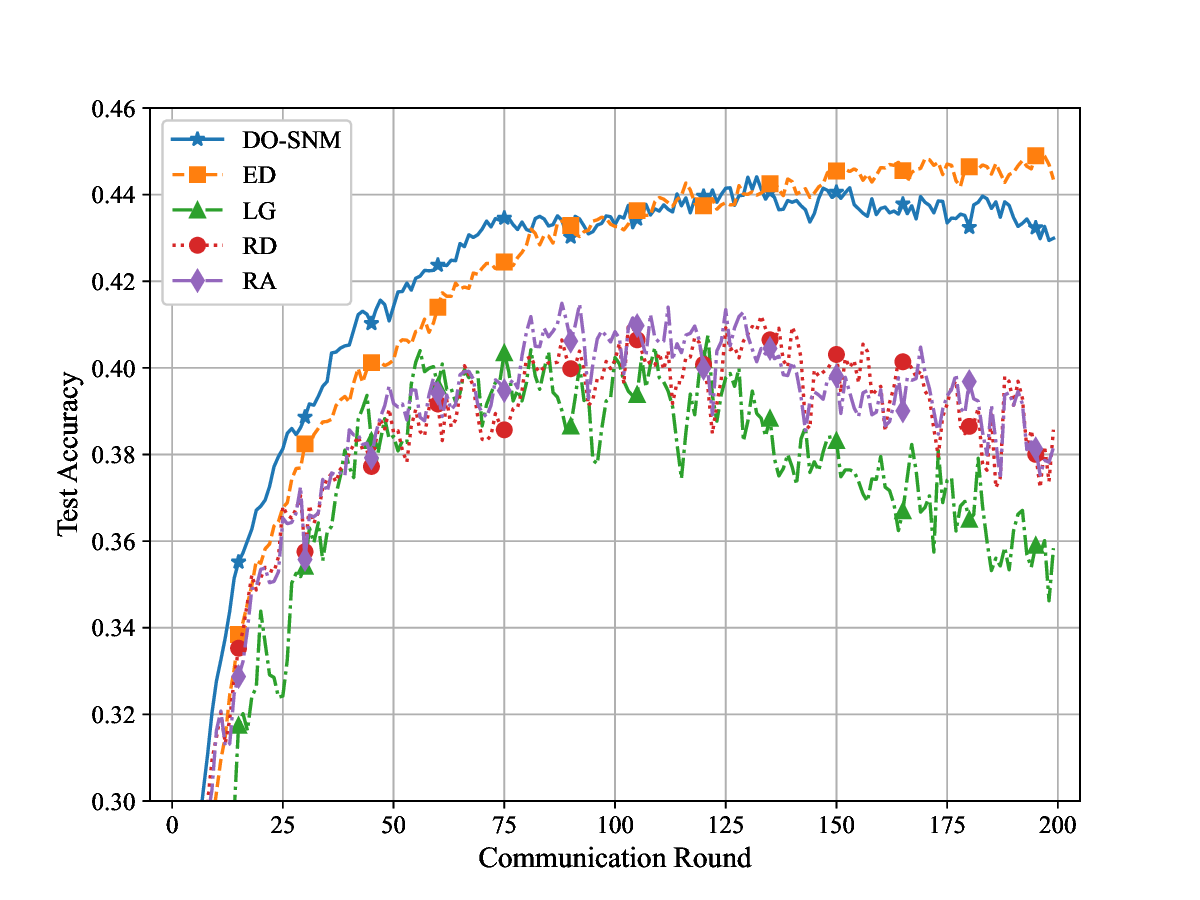}
        \end{minipage}
}\\
\subfloat[CIFAR-10]
{
    \begin{minipage}[b]{.4\linewidth}
        \centering
        \includegraphics[scale=0.37]{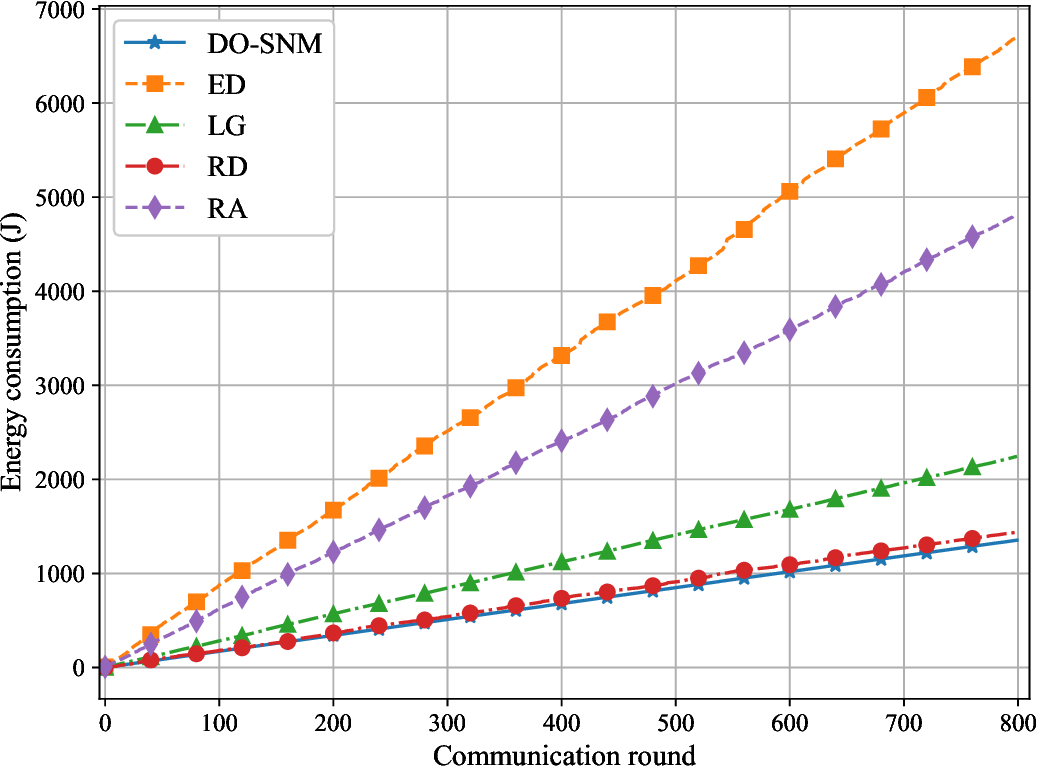}
        \end{minipage}
}
\subfloat[IEMOCAP]
{
    \begin{minipage}[b]{.4\linewidth}
        \centering
        \includegraphics[scale=0.37]{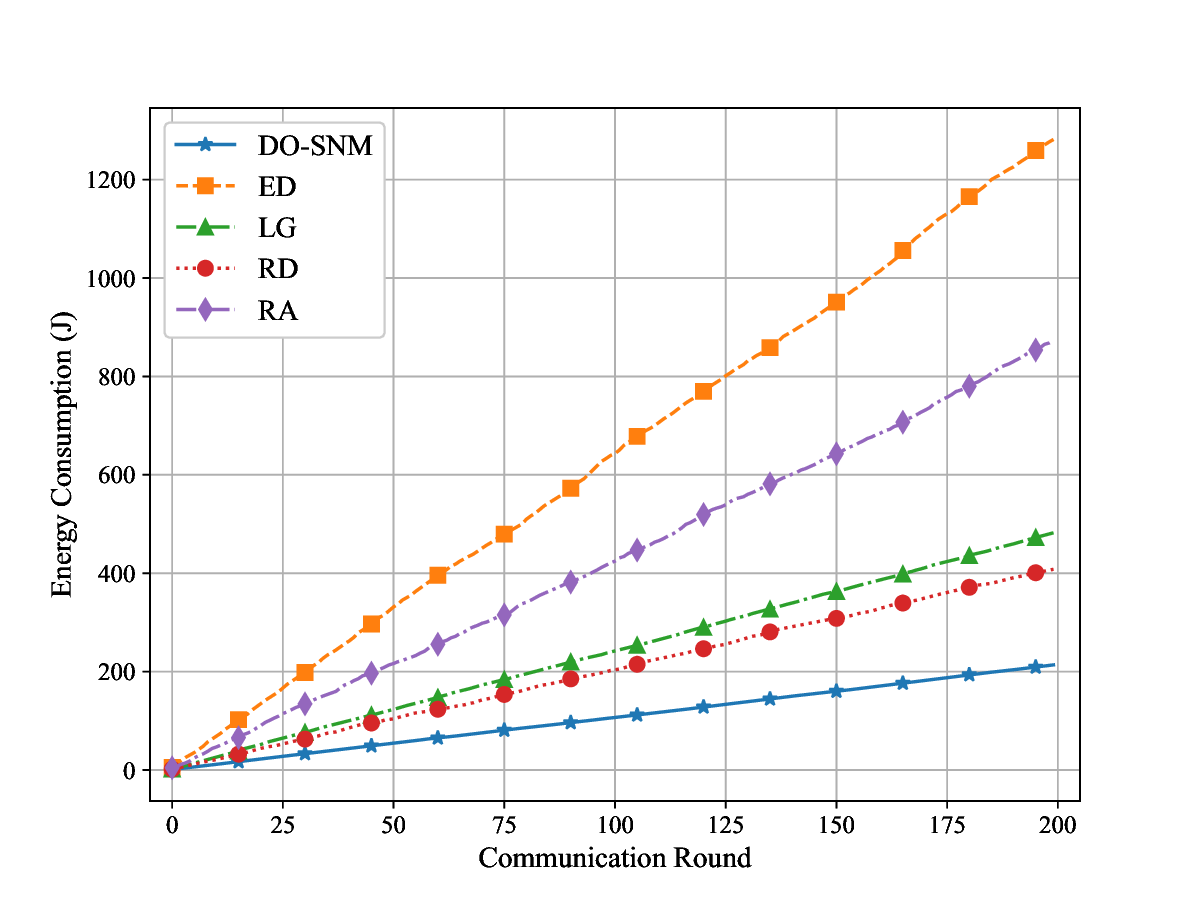}
        \end{minipage}
}
\caption{Test accuracy and energy consumption of CIFAR-10 and IEMOCAP datasets under different Algorithms.}
\end{figure*}

Considering the experimental results on CIFAR-10 as an instance, we observe distinct performance differentials among the baseline algorithms. The RA algorithm attains a test accuracy of $60\%$, $4\%$ lower than DO-SNM. Meanwhile, LG achieves $57\%$ accuracy, lagging $7\%$ behind our proposed method. Crucially, DO-SNM demonstrates superior energy efficiency, reducing consumption by approximately $60\%$ and $77\%$ compared to LG and RA, respectively. Both RA's random client selection and LG's proximity-based greedy strategy suffer from a critical limitation: fails to account for both the social network structure and client data-sharing relationships. This architectural oversight induces substantial variance in $\mathcal{G}_\mathcal{S}$ across global rounds, leading to significant performance fluctuations and eventual underfitting. In contrast, DO-SNM synthesizes graph structures with data characteristics to identify clients with minimized $\mathcal{G}_\mathcal{S}$. Given the positive correlation with $\mathcal{G}_\mathcal{S}$ inverse relationship with $\epsilon_\mathcal{S}$, our algorithm simultaneously elevates performance ceilings and achieves significant energy conservation.

Regarding RD and ED approaches, while both acknowledge client information sharing within the social network graph. RD prioritizes redundant data minimization, but achives only $56\%$ accuracy, $8\%$ lower than DO-SNM. ED maximizes effective data size, converging at $65\%$ accuracy, marginally $1\%$ higher than DO-SNM. In terms of energy consumption, RD consumes nearly the same energy as DO-SNM, while reducing ED’s consumption by over $80\%$. Although RD maintains low energ consumption, it leads to underfitting due to insufficient effective data. ED covers effective data over $80\%$ of the entire dataset, leading to prohibitive energy costs and rendering implementation impractical despite marginal accuracy advantages. Consequently, DO-SNM optimally balances model performance with energy efficiency, aligning with the core objectives of our joint optimization formulation \textbf{P}.

Additionally, due to the DP mechanism, Gaussian noise is injected during edge aggregation and broadcasting. Experimental results on the IEMOCAP dataset show that LG, RD, and RA exhibit significant overfitting beyond round 100. The overfitting stems from the constrained dataset scale and intrinsic label distribution imbalance fo IEMOCAP, further exacerbated by algorithmic instability that magnifies noise variance. Consequently, models overfit to the noise injected during training, resulting in significant performance deterioration. Conversely, ED maintains convergence stability through extensive client coverage, which dilutes noise impact on model parameters.

Collectively, these comparisons validate DO-SNM's efficiency under DP constraints, demonstrating synchronous optimization of energy efficiency, convergence speed, prediction accuracy, and noise robustness for practical deployment.
\section{Conclusion}
 In this paper, we integrate FL with social network scenario. Unlike the traditional FL structure, the information between clients might partially shared rather than entirely private. In this scenario, we introduce the concepts of EDCR and RDCR, and we examinethe impacts of effective data and redundant data on model performance. Considering the the mobility, social network and privacy protection, we develop the HFL-SNM framewor and formulate a joint optimization problem under resource and EDCR constraints. We decompose the problem into multiple subproblems and design DO-SNM algorithm to solve it. The core idea of the DO-SNM algorithm is to use two metric factors to evaluate the model performance and energy consumption, and combine efficient fast greedy algorithm and alternating optimization methods to address the three key sub-problems of client selection, edge association, and resource allocation layer by layer. 
 
 The experiment results shows that $r_\mathrm{ef}^0 =0.6$ and $\epsilon=80$ are the optimal trade-off points for model performance, energy consumption and privacy protection. Compared to the general baseline strategies RA and LG, our algorithm significantly reduces overhead by approximately $77\%$ and $60\%$ respectively. Compared to algorithms focused solely on effective or redundant data ED and RD, DO-SNM exhibits superior overall performance.
 
\appendices
\section{Preliminary experiments}
We explored the effect of the effective data and redundant data on model performance based on the CIFAR-10 dataset. 
\begin{figure*}[htbp]
\centering
\subfloat[Redundant data]
{
    \begin{minipage}[b]{.4\linewidth}
        \centering
        \includegraphics[scale=0.37]{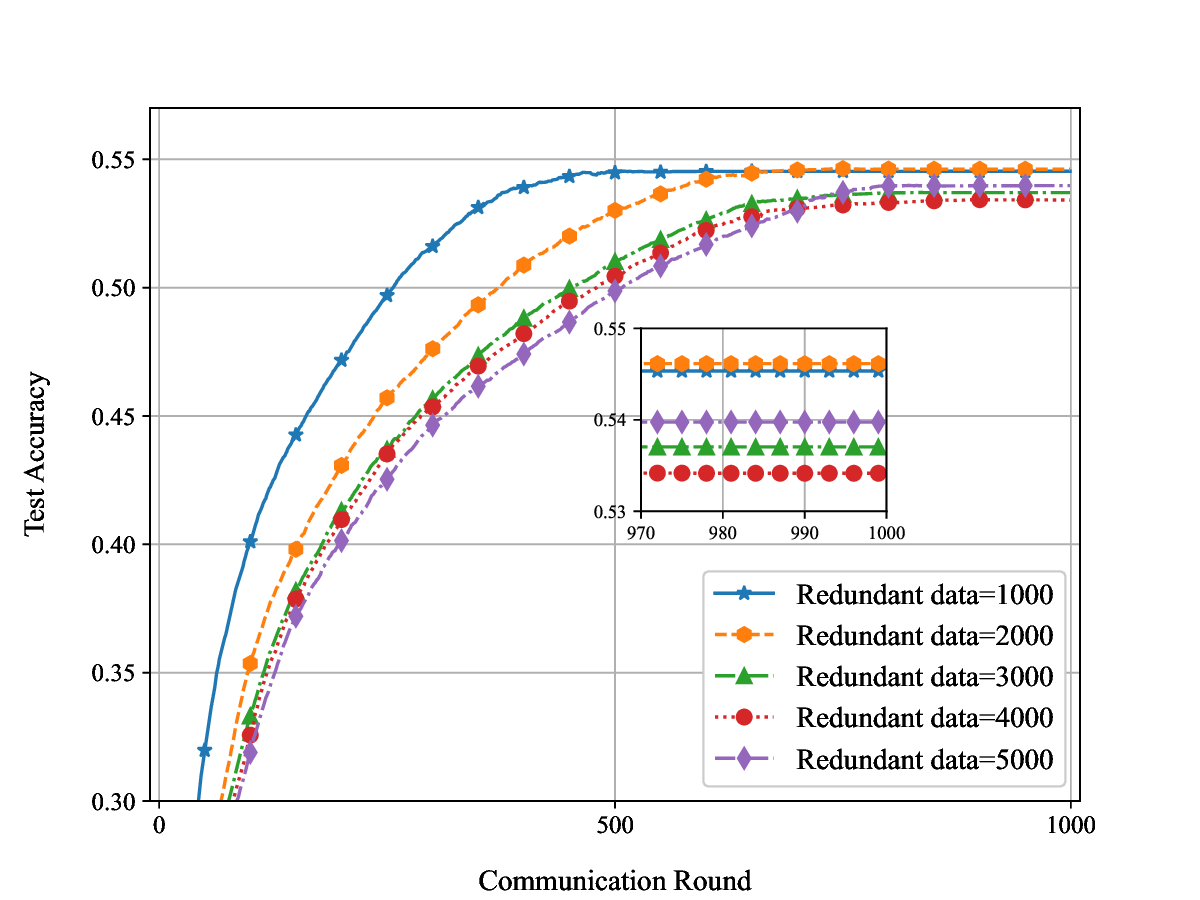}
        \end{minipage}
}
\subfloat[Effective data]
{
    \begin{minipage}[b]{.4\linewidth}
        \centering
        \includegraphics[scale=0.37]{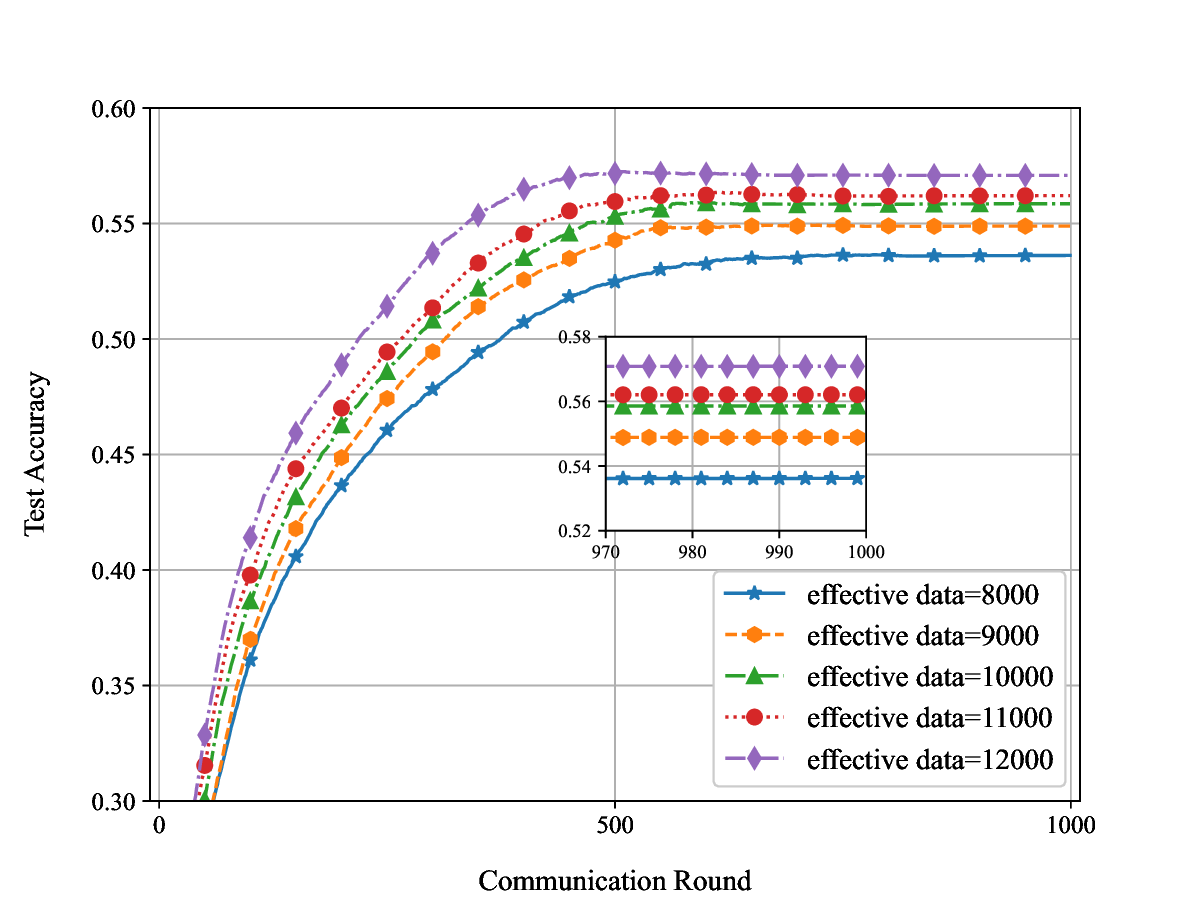}
        \end{minipage}
}
\caption{Impact of redundant data and effective data on model performance.}
\end{figure*}

For the experiment to explore the effect of redundant data on the model, we set the same amount of effective data ${D}_{\mathrm{ef}}^{\mathcal{S}} = 8000 $ for each set of samples, and the amount of redundant data ${D}_{\mathrm{re}}^{\mathcal{S}}$ varies from 1000 to 5000. As is shown in Fig. 6(a), before the 600-th global round, the model accuracy of the curve with the least redundant data consistently outperforms the other curves. As redundant data increases, the number of global rounds required for the model to converge increases. Moreover, the final model accuracy generally shows a downward trend. The result indicates that the model performance is negatively correlated with the redundant data. 

The other experiment explored the effect of effective data on the model performance. In this experiment, we set the same amount of redundant data ${D}_{\mathrm{re}}^{\mathcal{S}}=3000$ for each set of samples, varying the amount of effective data  ${D}_{\mathrm{ef}}^{\mathcal{S}}$ from 8000 to 12000. Fig. 6(b) clearly exhibits the impact of effective data on model performance. Throughout the entire process from training to convergence, as the effective data increases, the model accuracy increases from $53.8\%$ to $57\%$. Furthermore, the convergence speed gradually accelerates. The result indicates that the model performance is positively correlated with the effective data. 

The experimental results show that covering more effective data can improve model performance. However, large amounts of redundant data deteriorate model performance. Scheduling a large number of clients not only increases effective data but also inevitably increases redundant data, leading to significant energy consumption. While scheduling few clients leads to poor model performance. This makes it difficult to simultaneously achieve both low energy consumption and high model performance. Consequently, the processes of client selection and scheduling is pivotal in optimizing this balance.
\section{Proof of lemma 1}

The edge aggregation operation for $\mathcal{D}_i$ can be expressed as
\begin{align}
   s_{up}^{\mathcal{D}_i} &\stackrel{\triangle}{=} \boldsymbol{\omega}_i^\lambda = \underset{{\boldsymbol{\omega}}}{{\arg\min} \, } F_i(\boldsymbol{\omega},\mathcal{D}_{i}) \notag \\
   &= \frac{1}{D_i}\sum_{j=1}^{D_i} \underset{{\boldsymbol{\omega}}}{{\arg\min} \, } F_i(\boldsymbol{\omega},\mathcal{D}_{i,j}).
\end{align}
Let $\mathcal{D}_i'$ denote an adjacent dataset to $\mathcal{D}_i$, with the same data size as $\mathcal{D}_i$. Thus the sensitivity of $s_{e}^{\mathcal{D}_i}$ can be expressed as
\begin{align}
\Delta s_{e}^{\mathcal{D}_i} = &\max_{\mathcal{D}_i,\mathcal{D}_i'} \left\| s_{e}^{\mathcal{D}_i} - s_{e}^{\mathcal{D}_i'} \right\| \notag \\
= & \max_{\mathcal{D}_i,\mathcal{D}_i'} \Biggl\Vert \frac{1}{D_i}\sum_{j=1}^{D_i} \underset{{\boldsymbol{\omega}}}{{\arg\min}} \, F_i(\boldsymbol{\omega},\mathcal{D}_{i,j}) \notag \\
& - \frac{1}{D_i'}\sum_{j=1}^{D_i'} \underset{{\boldsymbol{\omega}}}{{\arg\min}} \, F_i(\boldsymbol{\omega},\mathcal{D'}_{i,j}) \Biggr\Vert \leq \frac{2W}{D_i'}.
\end{align} 

\section{Proof of lemma 2}
The sensitivity for $\mathcal{D}_n$ during the downlink phase can be expressed as
\begin{align}
    s_\mathrm{down}^{\mathcal{D}_n} &\stackrel{\triangle}{=} \boldsymbol{\omega_k} = \frac{1}{D^{\mathcal{S}^k}} \sum_{i=1}^{m_k}  D_i\boldsymbol{\omega_i^{\lambda_i}}
\end{align}
Similar to (30), the sensitivity can be given as
\begin{align}
\Delta s_\mathrm{down}^{\mathcal{D}_n} = &\max_{\mathcal{D}_n,\mathcal{D}_n'} \left\| s_\mathrm{down}^{\mathcal{D}_n} - s_\mathrm{down}^{\mathcal{D}_n'} \right\| \notag \\
= & \max_{\mathcal{D}_n,\mathcal{D}_n'} \Biggl\Vert \frac{D_n}{D^{\mathcal{S}^k}} \boldsymbol{\omega_n^{\lambda_n}}(D_n) -\frac{D_n'}{D^{\mathcal{S}^k}} \boldsymbol{\omega_n^{\lambda_n}}(D_n') \Biggr\Vert\notag \\
=& \frac{D_n}{D^{\mathcal{S}^k}} \max_{\mathcal{D}_n,\mathcal{D}_n'} \Biggl\Vert \boldsymbol{\omega_n^{\lambda_n}}(D_n) - \boldsymbol{\omega_n^{\lambda_n}}(D_n') \Biggl\Vert \notag \\
=& \frac{D_n \Delta s_\mathrm{up}^{\mathcal{D}_n}}{D^{\mathcal{S}^k}}  \leq \frac{2W}{D^{\mathcal{S}^k}}.
\end{align}
Furthermore, the aggregation process with artificial noise added by clients can be expressed
\begin{align}
    \tilde{\boldsymbol{\omega}}_k =& \frac{1}{D^{\mathcal{S}^k}} \sum_{i=1}^{m_k} D_i(\boldsymbol{\omega}_i^{\lambda_i}+\boldsymbol{n}_i) \notag \\
    =&\frac{1}{D^{\mathcal{S}^k}} (\sum_{i=1}^{m_k} D_i\boldsymbol{\omega}_i^{\lambda_i} + \sum_{i=1}^{m_k} D_i\boldsymbol{n}_i)
\end{align}

To ensure a edge $(\epsilon,\delta)$-DP in downlink channels, we know the standard deviation of additive noise can be set to $\sigma_k = \frac{cC^k\Delta s_\mathrm{down}^k}{\epsilon}$, where $\Delta s_\mathrm{down}^k = \frac{2W}{D^{\mathcal{S}^k}}$. Hence, we derive the standard derivation of additive noise at $e_k$ as
\begin{align}
    \sigma_\mathrm{down}^k = \sqrt{\sigma_k^2-\frac{1}{m_k^2} \sum_{i=1}^{m_k} {(}\sigma_\mathrm{up}^i)^2} =
\begin{cases}
\frac{2cW\sqrt{Q}}{\epsilon} & \mathrm{if} \phantom{k}Q>0 \\
\phantom{kk}0 & \mathrm{otherwise}
\end{cases}
\end{align}
\section{Proof of lemma 3}
We note that the function $\Lambda_n(\nu_{n}) = \frac{\alpha}{2} X_n \nu_{n}^2$ is convex with respect to $\nu_{n}$.  Let $x = \frac{B_{n,k}N_0}{h_{n,k}p_n}>0$. We define the function $\Psi(x)$ as  

\begin{align}
    \Psi(x) &= \frac{Y_n p_n}{B_{n,k}} \frac{h_{n,k}}{z N_0 \ln2} \notag \\
    &= \frac{1}{x \ln(1+\frac{1}{x})}.
\end{align}
Then the second derivative of $\Psi$ with respect to $x$ is derivied as 
\begin{equation}
    \frac{d^2 \Psi}{d x^2} = \frac{(x \ln(1+\frac{1}{x}))^2+2x^2 \ln(1+\frac{1}{x})(1-(x+1) \ln(1+\frac{1}{x}))^2}{x(x+1)^2(x \ln(1+\frac{1}{x}))^4}.
\end{equation}
Evidently, the result of (38) is positive. Consquently, $\Psi$ is convex with respect to $x$ and equivalently to $B_{n,k}$. Since $E_k$ can be expressed as an affine combination of $\Lambda(\nu_{n})$ and $\Psi(x)$, and since affine transformations preserve convexity, it follows that $E_k$ is also a convex function. 

\section{Proof of lemma 4}
The partial derivatives of (21) are computed by 
\begin{subequations}
\begin{align}\label{eq:(179)}
&\frac{\partial L_k}{\partial \nu_n}=\tau \alpha X_n\nu_n-\gamma_n + \sigma_n - \frac{\theta_nX_n}{\nu_n^2}, \\
&\frac{\partial L_k}{\partial B_{n,k}}=\frac{\tau p_n Y_n^2}{z\ln2 B_{n,k}^2}(\frac{1}{1+\frac{B_{n,k}N_0}{h_{n,k}p_n}}-\ln (1+\frac{h_{n,k}p_n}{B_{n,k}N_0}))+\mu_k \notag\\
&\phantom{\frac{\partial L_k}{\partial B_{n,k}}}+\frac{\theta_n Y_n^2}{z\ln2 B_{n,k}^2}(\frac{1}{1+\frac{B_{n,k}N_0}{h_{n,k}p_n}}-\ln (1+\frac{h_{n,k}p_n}{B_{n,k}N_0})).
\end{align}
\end{subequations}
Furthermore, by setting (37a) and (37b) equal to zero and incorporating the constraint conditions, we obtain the following equations.
\begin{equation}\label{eq:(199)}
\left\{
\begin{aligned}
&\tau \alpha X_n\nu_n-\gamma_n + \sigma_n - \frac{\theta_nX_n}{{\nu_n}^2} =0, && \text{(a)} \\
&\frac{\tau p_n Y_n^2}{z\ln2 B_{n,k}^2}(\frac{1}{1+\frac{B_{n,k}N_0}{h_{n,k}p_n}}-\ln (1+\frac{h_{n,k}p_n}{B_{n,k}N_0}))+\mu_k \\
&+\frac{\theta_n Y_n^2}{z\ln2 B_{n,k}^2}(\frac{1}{1+\frac{B_{n,k}N_0}{h_{n,k}p_n}}-\ln (1+\frac{h_{n,k}p_n}{B_{n,k}N_0}))=0, && \text{(b)} \\
&\mu_k(\sum_{n \in \{i| u_i \in \mathcal{S}^k\}}B_{n,k}-B_0) =0, && \text{(c)} \\
&\theta_n(\frac{X_n}{\nu_n}+\frac{Y_n}{B_{n,k}}-t_0) =0, && \text{(d)} \\
&\gamma_n(\nu_\mathrm{min}-{\nu_n})=0, && \text{(e)} \\
&\sigma_n({\nu_n}-\nu_\mathrm{max})=0, && \text{(f)}
\end{aligned}
\right.
\end{equation}
 where (38c) - (38f) are the complementary slackness conditions of \textbf{P}. We obtain (22a) and (22b) by rearranging the terms of (38a) and (38b) and derive $\mu_k>0$. Thus, (38c) becomes a binding constraint, which consequently yields (22c).

\section{Proof of lemma 5}
Assume, for the sake of contradiction, that the original statement is false. We have $\nu_{min}<\nu_n \leq \nu_{max} $ if constraint (20a) is slack for $u_n$. Let us consider the following two cases:
\begin{itemize}
    \item \textbf{$\nu_{min}<\nu_n < \nu_{max}$:}  According to (38e) (38f) and (22a), we know $\gamma_n = 0$, $\sigma_n=0$ and $\theta_n = \tau\alpha \nu_n^3 > 0$. By referring to (38d), we konw $\frac{X_n}{\nu_n}+\frac{Y_n}{B_{n,k}}-t_0 = 0 $. Hence, the constraint (16a) is binding, which is a contradiction because the constraint (16a) is slack. Therefore, this case is false.
    \item \textbf{$\nu_n = \nu_{max}$:}  According to (38e) (38f) and (22a), we know $\gamma_n = 0$, $\sigma_n \geq 0$ and $\theta_n = \tau\alpha(\nu_n)^3+\frac{\sigma_n(\nu_n)^2}{X_n} \geq \tau\alpha \nu_n^3 > 0$. By referring to (38d), we konw $\frac{X_n}{\nu_n}+\frac{Y_n}{B_{n,k}}-t_0 = 0 $. Hence, the constraint (20a) is binding, which is a contradiction because the constraint (20a) is slack. Therefore, this case is false.
\end{itemize}
Therefore, our assumption is false and the original statements in lemma2 make sense.

\section{Proof of lemma 6}
According to (38c), (22b) and (22c), if the frequencies of clients are given, the bandwidth distribution can be derived. The equation of (22b) is rewrite as
\begin{equation}
    \frac{z \mu_k \ln2}{\tau p_n + \theta_n}=(\frac{Y_n}{B_{n,k}})^2(\ln (1+\frac{h_{n,k}p_n}{B_{n,k}N_0}) - \frac{1}{1+\frac{B_{n,k}N_0}{h_{n,k}p_n}}).
\end{equation}
Then we discuss discuss how the bandwidth allocation and $\mu_k$ vary with frequencies of different clients.  Let $x = \frac{B_{n,k}N_0}{h_{n,k}p_n}>0$. We define $\Phi_n = \ln(1+\frac{1}{x}) - \frac{1}{1+x}$. The first derivative of $\Phi_n$ with respect to $x$ is derived as
    \begin{equation}
        \frac{d\Phi}{dx} = -\frac{1}{x(x+1)^2} < 0.
    \end{equation} 
Then $\Phi_n > 0$ is monotonically decreasing with respect to $x$ and equivalently to $B_{n,k}$. Moreover, $(\frac{Y_n}{B_{n,k}})^2 > 0$ is monotonically decreasing with respect to $B_{n,k}$. We have the right side of (39) is monotonically decreasing with respect to $B_{n,k}$. Considering the following cases :
\begin{itemize}
    \item \textbf{$B_{n,k}$ remains constant:} In this case, the right side of the equation (39) remains unchanged, and consequently, the left side of the equation must also be unchanged. If $\nu_n$ increases, $\mu_k$ increases. For clients $u_i \neq u_n$, $\mu_k$ remains unchanged. While all clients in $\mathcal{S}^k$ have equal $\mu_k$. Thus this case is false.
    \item \textbf{$B_{n,k}$ is monotonically decreasing with respect to $\nu_n$:} We assume $\nu_n$ increases. Then $B_{n,k}$ decreases and  $(\frac{Y_n}{B_{n,k}})^2\Phi_n$ increases, which means $\mu_k$ must increase. For clients $u_i \neq u_n$, $\nu_i$ remains unchanged. Then $(\frac{Y_i}{B_{i,k}})^2\Phi_i$ increases and $B_{i,k}$ decreases. We derive $\sum_{n \in \{i| u_i \in \mathcal{S}^k\}}B_{n,k}^* < B_0$, which is contradictory with (22c).Thus this case is also false.
    \item \textbf{$B_{n,k}$ is monotonically increasing with respect to $\nu_n$:} Similarly, we assume $\nu_n$ increases. Then $B_{n,k}$ increases and  $(\frac{Y_n}{B_{n,k}})^2\Phi_n$ decreases. We need to further discussion for $\mu_k$ to judge the feasibility of this case.
    \begin{itemize}
        \item \textbf{$\mu_k$ remains constant:} In this case, For clients $u_i \neq u_n$, $(\frac{Y_i}{B_{i,k}})^2\Phi_i$ and $B_{i,k}$ remains unchanged. We have $\sum_{n \in \{i| u_i \in \mathcal{S}^k\}}B_{n,k} > B_0$, which is contradictory with (22c). This case is false.
        \item \textbf{$\mu_k$ decrease:} In this case, For clients $u_i \neq u_n$, $(\frac{Y_i}{B_{i,k}})^2\Phi_i$ decreases and $B_{i,k}$ increase. We have $\sum_{n \in \{i| u_i \in \mathcal{S}^k\}}B_{n,k} > B_0$, which is contradictory with (22c). This case is also false.
        \item \textbf{$\mu_k$ increase:} In this case, For clients $u_i \neq u_n$, $(\frac{Y_i}{B_{i,k}})^2\Phi_i$ increases and $B_{i,k}$ decrease. Because $B_{n,k}$ increases, (22c) is solvable. Therefore, this case makes sense.
    \end{itemize}
\end{itemize}
Consequently, $B_{n,k}$ is is monotonically increasing with respect to $\nu_n$ while decreasing with respect to $\nu_i, i \neq n$. Combined with $\Gamma_n = \frac{X_n}{\nu_n}+ \frac{z}{B_{n,k}\ln (1+\frac{h_{n,k}p_n}{B_{n,k}B_0})} - t_0$, we easily derive that $\Gamma_n$ is monotonically decreasing with respect to $\nu_n$ while increasing with respect to $\nu_i, i \neq n$.

\section{Proof of lemma 7}
According to lemma3, we have 
$\Gamma_n(\nu_n=\nu_{min},\nu_i = \nu_i^*, i \neq n) \geq \Gamma_n(\nu_n=\nu_{min},\nu_i = \nu_{min}, i \neq n) 
> 0 
\geq \Gamma_n(\nu_n=\nu_n^*,\nu_i = \nu_i^*, i \neq n)$.    
Consequently, $\nu_n^* >\nu_{min}$ and the constraint (20a) is binding on $u_n$.
\bibliographystyle{IEEEtran}

\begin{IEEEbiography}[{\includegraphics[width=1in,height=1.25in,clip,keepaspectratio]{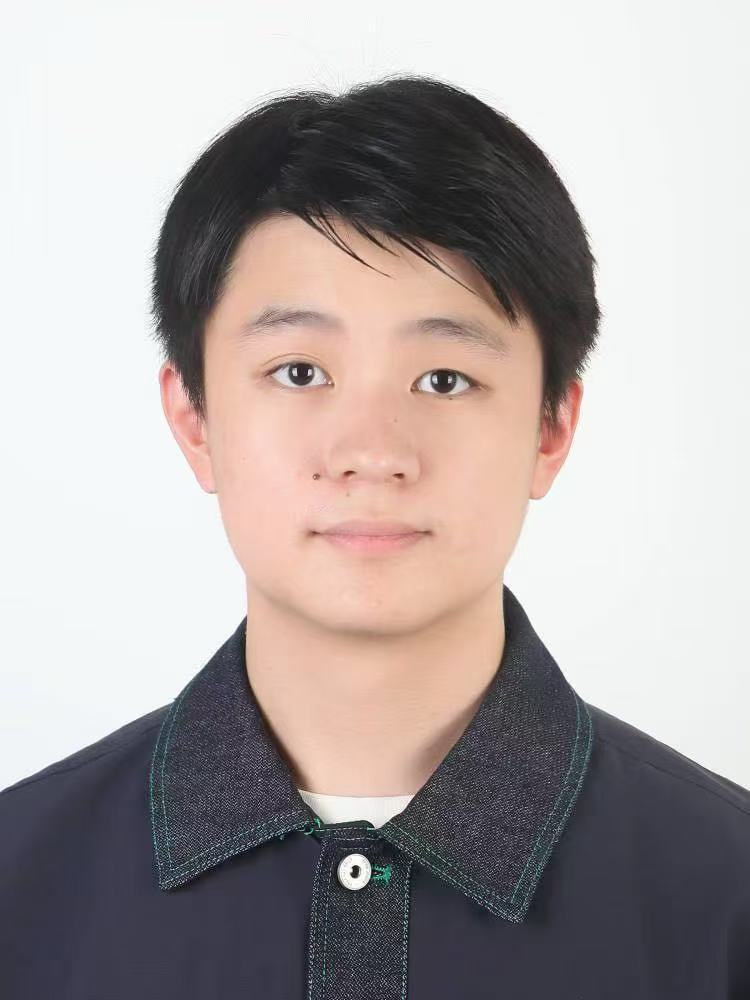}}]{Zeyu Chen}
received the B.S. degree in energy engineering from Huazhong University of Science and Technology (HUST) in 2023. He is currently pursuing the M.S. degree with the Broad-band Access Network Laboratory, Department of Electronic Engineering, Shanghai Jiao Tong University (SJTU), China. His research interests include federated learning and wireless networks.
\end{IEEEbiography}
\begin{IEEEbiography}[{\includegraphics[width=1in,height=1.25in,clip,keepaspectratio]{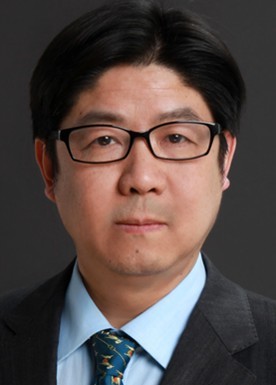}}]{Wen Chen}
(M’03–SM’11) received BS and MS from Wuhan University, China in 1990 and 1993 respectively, and PhD from University of Electro-communications, Japan in 1999. He is now a tenured Professor with the Department of Electronic Engineering, Shanghai Jiao Tong University, China. He is a fellow of Chinese Institute of Electronics and the distinguished lecturers of IEEE Communications Society and IEEE Vehicular Technology Society. He also received Shanghai Natural Science Award in 2022. He is the Shanghai Chapter Chair of IEEE Vehicular Technology Society, a vice president of Shanghai Institute of Electronics, Editors of IEEE Transactions on Wireless Communications, IEEE Transactions on Communications, IEEE Access and IEEE Open Journal of Vehicular Technology. His research interests include multiple access, wireless AI and RIS communications. He has published more than 200 papers in IEEE journals with citations more than11,000 in Google scholar. 
\end{IEEEbiography}
\begin{IEEEbiography}[{\includegraphics[width=1in,height=1.25in,clip,keepaspectratio]{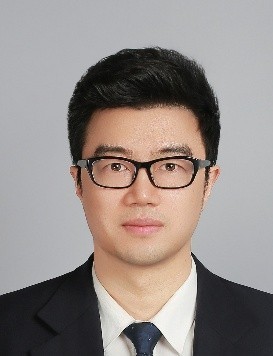}}]{Jun Li} 
(M’09-SM’16-F’25) received Ph.D. degree in Electronic Engineering from Shanghai Jiao Tong University, Shanghai, P. R. China in 2009. From January 2009 to June 2009, he worked in the Department of Research and Innovation, Alcatel Lucent Shanghai Bell as a Research Scientist. From June 2009 to April 2012, he was a Postdoctoral Fellow at the School of Electrical Engineering and Telecommunications, the University of New South Wales, Australia. From April 2012 to June 2015, he was a Research Fellow at the School of Electrical Engineering, the University of Sydney, Australia. From June 2015 to June 2024, he was a Professor at the School of Electronic and Optical Engineering, Nanjing University of Science and Technology, Nanjing, China. He is now a Professor at the School of Information Science and Engineering, Southeast University, Nanjing, China. He was a visiting professor at Princeton University from 2018 to 2019. His research interests include distributed intelligence, multiple agent reinforcement learning, and their applications in ultra-dense wireless networks, mobile edge computing, network privacy and security, and industrial Internet of Things. He has co-authored more than 300 papers in IEEE journals and conferences. He was serving as an editor of IEEE Transactions on Wireless Communication and TPC member for several flagship IEEE conferences.
\end{IEEEbiography}
\begin{IEEEbiography}
[{\includegraphics[width=1in,height=1.25in,clip,keepaspectratio]{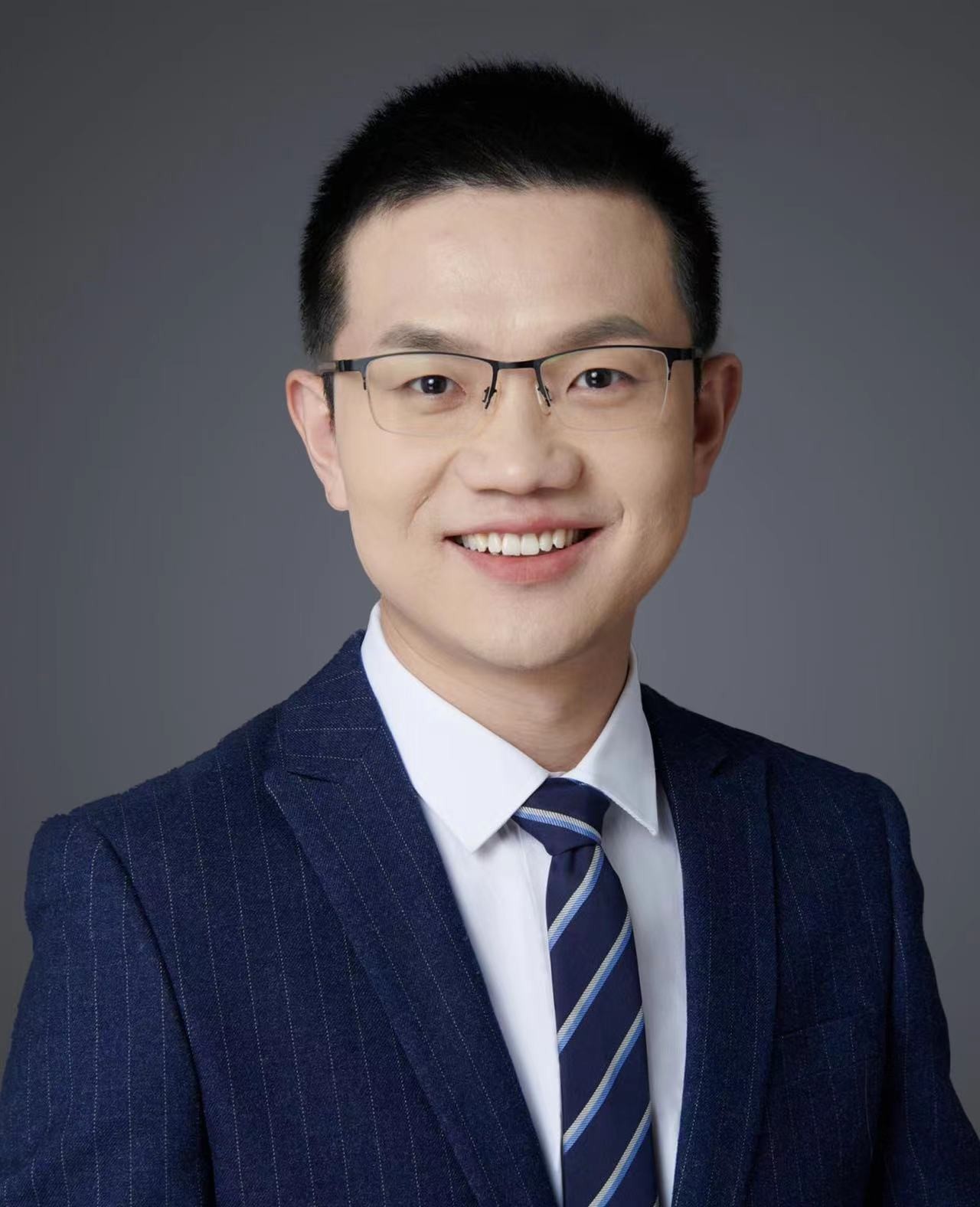}}]{Qingqing Wu}
(S’13-M’16-SM’21) is an Associate Professor with Shanghai Jiao Tong University. His current research interest includes intelligent reflecting surface (IRS), unmanned aerial vehicle (UAV) communications, and MIMO transceiver design. He has coauthored more than 100 IEEE journal papers with 40+ ESI highly cited papers and 10+ ESI hot papers, which have received more than 41,000 Google citations. He has been listed as the Clarivate ESI Highly Cited Researcher since 2021, the Most Influential Scholar Award in AI-2000 by Aminer since 2021, World’s Top 2
He was the recipient of the IEEE ComSoc Fred Ellersick Prize, Best Tutorial Paper Award in 2023, Asia-Pacific Best Young Researcher Award and Outstanding Paper Award in 2022, Young Author Best Paper Award in 2021 and 2024, the Outstanding Ph.D. Thesis Award of China Institute of Communications in 2017, the IEEE ICCC Best Paper Award in 2021, and IEEE WCSP Best Paper Award in 2015. He was the Exemplary Editor of IEEE Communications Letters in 2019 and the Exemplary Reviewer of several IEEE journals. He serves as an Associate/Senior/Area Editor for IEEE Transactions on Wireless Communications, IEEE Transactions on Communications, IEEE Communications Letters, IEEE Wireless Communications Letters. He is the Lead Guest Editor for IEEE Journal on Selected Areas in Communications. He is the workshop co-chair for IEEE ICC 2019-2023 and IEEE GLOBECOM 2020. He serves as the Workshops and Symposia Officer of Reconfigurable Intelligent Surfaces Emerging Technology Initiative and Research Blog Officer of Aerial Communications Emerging Technology Initiative. He has served as the Chair of the IEEE ComSoc Young Professional
\end{IEEEbiography}
\begin{IEEEbiography}
[{\includegraphics[width=1in,height=1.25in,clip,keepaspectratio]{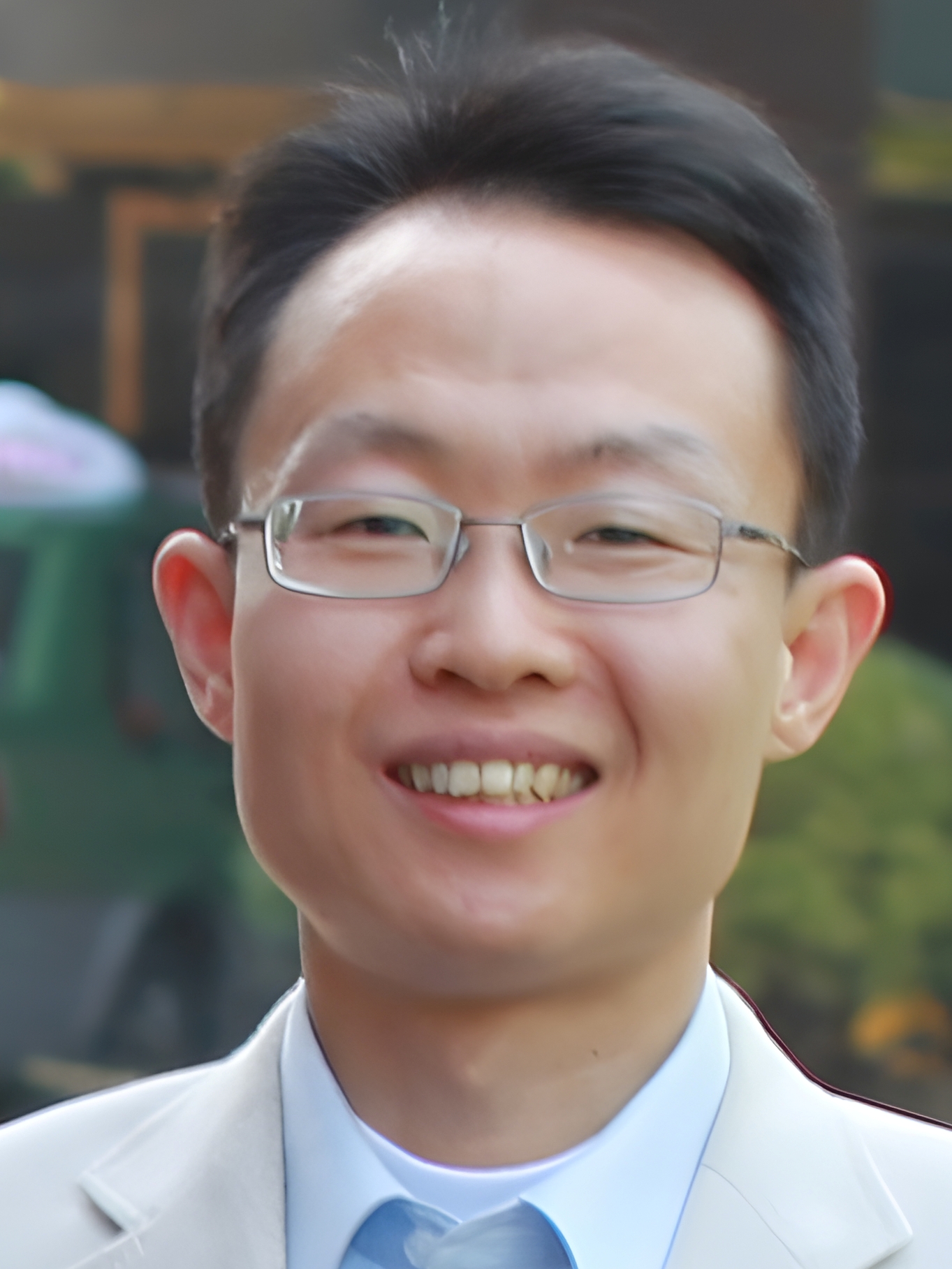}}]{Ming Ding}
(IEEE M’12-SM’17) received the B.S. (with first-class Hons.) and M.S. degrees in electronics engineering from Shanghai Jiao Tong University (SJTU), China, and the Doctor of Philosophy (Ph.D.) degree in signal and information processing from SJTU, in 2004, 2007, and 2011, respectively. From April 2007 to September 2014, he worked at Sharp Laboratories of China as a Researcher/Senior Researcher/Principal Researcher. Currently, he is the Group Leader of the Privacy Technology Group at CSIRO’s Data61 in Sydney, NSW, Australia. Also, he is an Adjunct Professor at Swinburne University of Technology and University of Technology Sydney, Australia. His research interests include data privacy and security, machine learning and AI, and information technology. He has co-authored more than 250 papers in IEEE/ACM journals and conferences, all in recognized venues, and around 20 3GPP standardization contributions, as well as two books, i.e., “Multi-point Cooperative Communication Systems: Theory and Applications” (Springer, 2013) and “Fundamentals of Ultra-Dense Wireless Networks” (Cambridge University Press, 2022). Also, he holds 21 US patents and has co-invented another 100+ patents on 4G/5G technologies. Currently, he is an editor of IEEE Communications Surveys and Tutorials and IEEE Transactions on Network Science and Engineering. Besides, he has served as a guest editor/co-chair/co-tutor/TPC member for multiple IEEE top-tier journals/conferences and received several awards for his research work and professional services, including the prestigious IEEE Signal Processing Society Best Paper Award in 2022 and Highly Cited Researcher recognized by Clarivate Analytics in 2024.
\end{IEEEbiography}
\begin{IEEEbiography}
[{\includegraphics[width=1in,height=1.25in,clip,keepaspectratio]{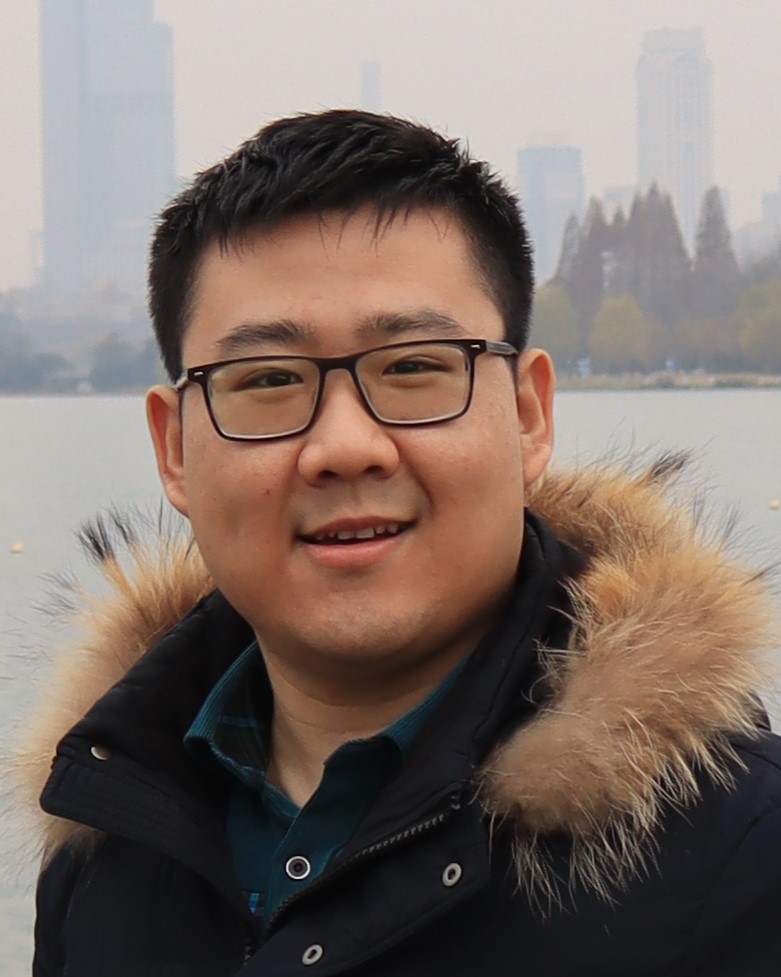}}]{Xuefeng Han}
received the B.S. degree in communication engineering from University of Electronic Science and Technology of China in 2020, and received the Ph.D. degree from Department of Electronic Engineering, Shanghai Jiao Tong University in 2025. His research interests include federated learning, lightweight neural network, multimodal learning and resource management in future wireless networks.
\end{IEEEbiography}
\begin{IEEEbiography}
[{\includegraphics[width=1in,height=1.25in,clip,keepaspectratio]{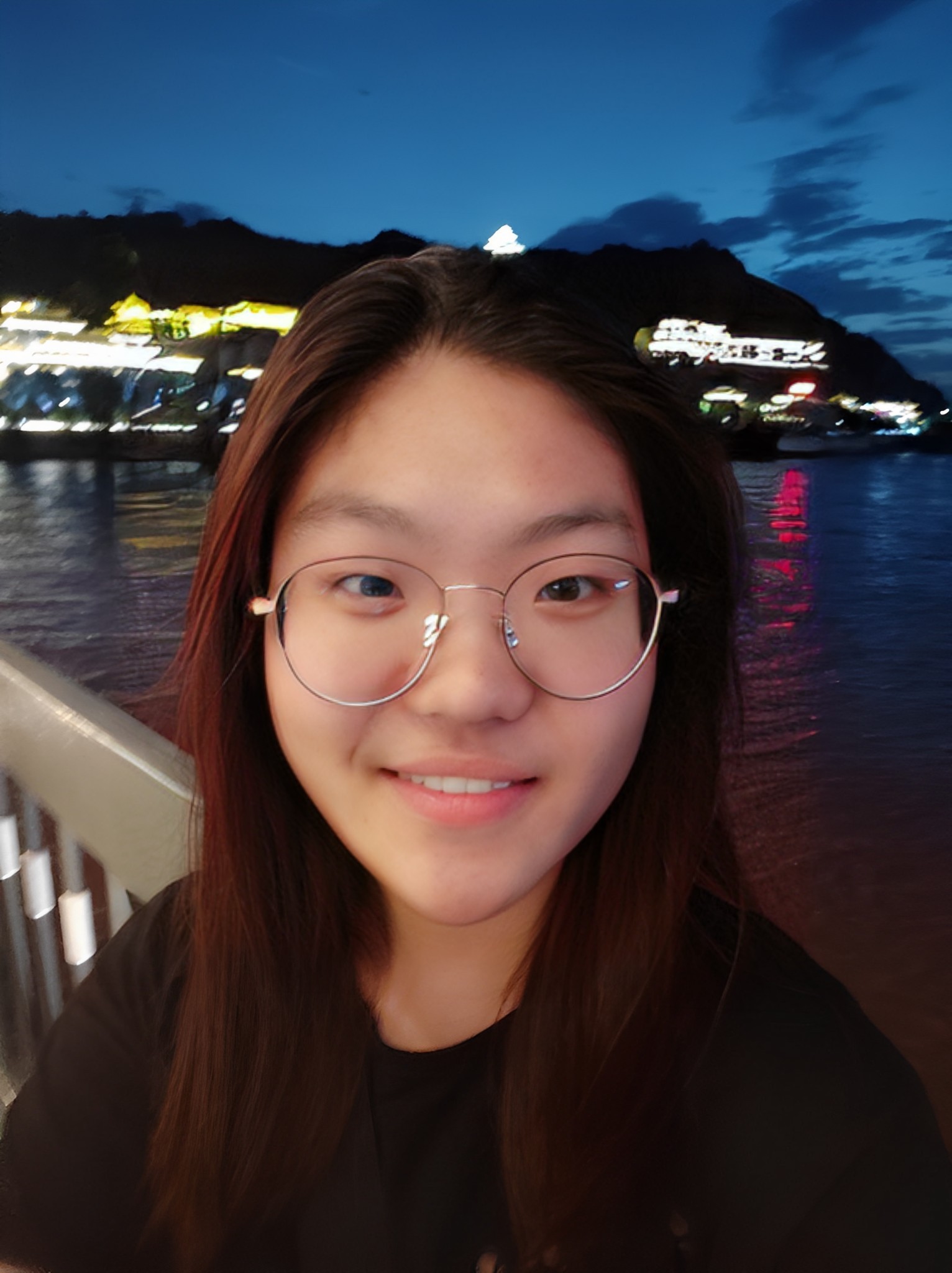}}]
	{Xiumei Deng} received the Ph.D. degree in Information and Communications Engineering from Nanjing University of Science and Technology, Nanjing, China, in 2024. Before that, she received the B.E. degree in Electronic Information Engineering from Nanjing University of Science and Technology, Nanjing, China, in 2018. She is currently a postdoctoral fellow at the Singapore University of Technology and Design.
\end{IEEEbiography}
\begin{IEEEbiography}
[{\includegraphics[width=1in,height=1.25in,clip,keepaspectratio]{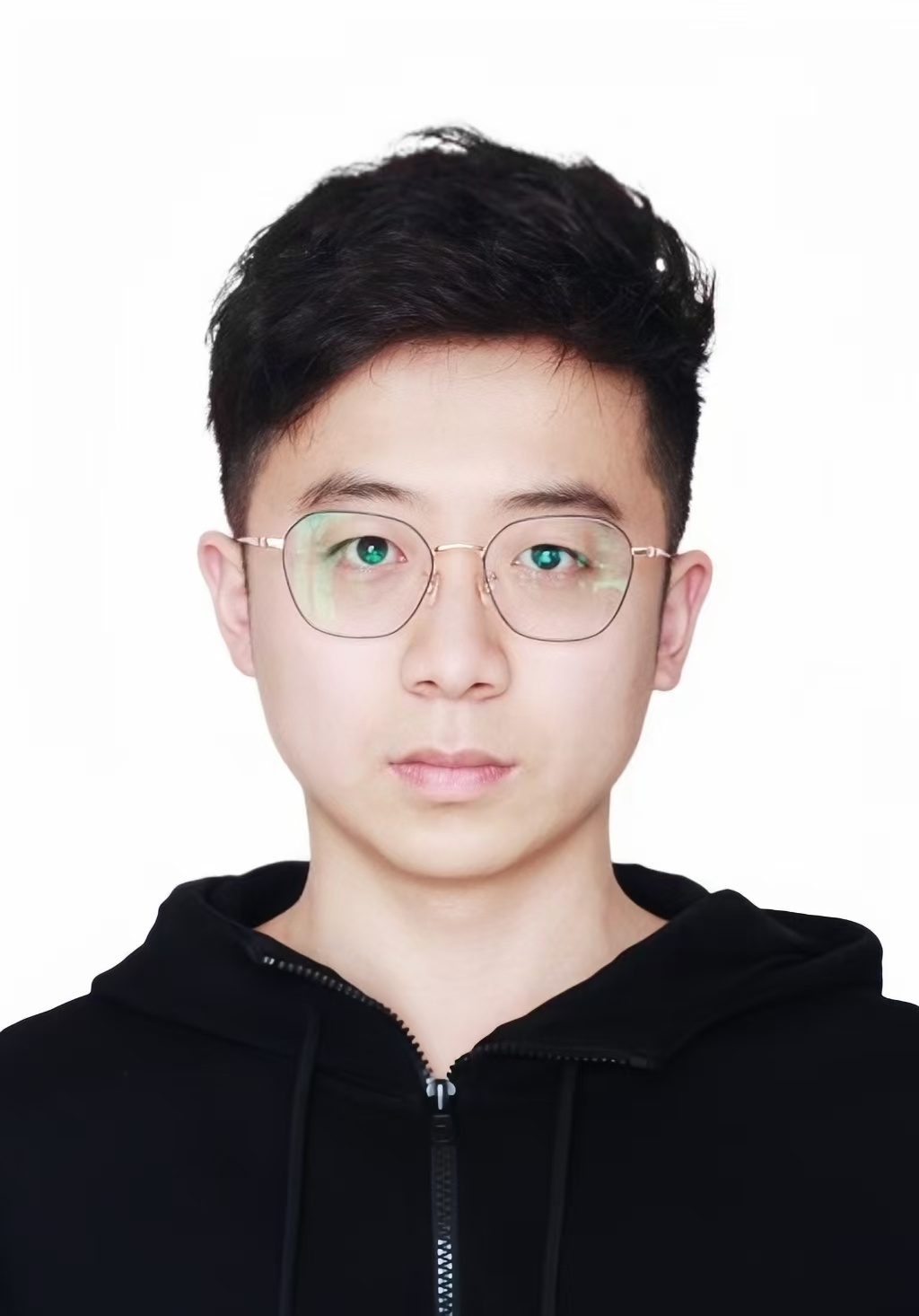}}]
	{Liwei Wang} is currently pursuing the Ph.D. degree with the Broad-band Access Network Laboratory, Department of Electronic Engineering, Shanghai Jiao Tong University (SJTU), Shanghai, China. His research interests include federated learning and wireless resource management in wireless networks.
\end{IEEEbiography}
\end{document}